\documentclass[journal]{IEEEtran}
\IEEEoverridecommandlockouts
\usepackage{cite}
\usepackage{amsmath,amssymb,amsfonts}
\usepackage{graphicx}
\usepackage{textcomp}
\usepackage{hyperref}       
\usepackage{url}            
\usepackage{booktabs}       
\usepackage{nicefrac}       
\usepackage{microtype}      
\usepackage{xcolor}         
\usepackage{etoolbox}
\usepackage{changepage}
\usepackage{titlesec}
\usepackage{algorithm}
\usepackage{algorithmic}
\usepackage{times}
\usepackage{float}
\usepackage{ragged2e}
\usepackage{subfigure}
\usepackage{verbatim}
\usepackage{diagbox}
\usepackage{threeparttable}
\usepackage{multirow}
\usepackage{multicol}
\usepackage{blindtext}
\usepackage[T1]{fontenc}

\newtheorem{definition}{Definition}
\newtheorem{proposition}{Proposition}

\newtheorem{proof}{Proof}

\def\BibTeX{{\rm B\kern-.05em{\sc i\kern-.025em b}\kern-.08em
    T\kern-.1667em\lower.7ex\hbox{E}\kern-.125emX}}
    
\begin{document}

\title{Preserving Temporal Dynamics \\in Time Series Generation}
\author{Ci Lin, Futong Li, Tet Yeap, and Iluju Kiringa}

\maketitle

\begin{abstract}
	Time-series data augmentation plays a crucial role in regression-oriented forecasting tasks, where limited data restricts the performance of deep learning models. While Generative Adversarial Networks (GANs) have shown promise in synthetic time-series generation, existing approaches primarily focus on matching marginal data distributions and often overlook the temporal dynamics that naturally exist in the original multivariate time series. When generating multivariate time series, this mismatch leads to distribution shift and temporal drift, thereby degrading the fidelity of the synthetic sequences.

	In this work, we propose a model-agnostic Markov Chain Monte Carlo (MCMC)-based framework to mitigate distribution shift and preserve temporal dynamics in synthetic time series. We provide a theoretical analysis of how conditional generative models accumulate deviations under sequential generation and demonstrate that the MCMC algorithm can correct these discrepancies by enforcing consistency with empirical transition statistics between neighboring time points.
	
	Extensive experiments on the Lorenz, Licor, ETTh, and ILI datasets using RCGAN, GCWGAN, TimeGAN, SigCWGAN, and AECGAN demonstrate that the proposed MCMC framework consistently improves autocorrelation alignment, skewness error, kurtosis error, R$^2$, discriminative score, and predictive score. These results suggest that synthetic time series consistent with the original data require explicit preservation of transition laws rather than solely relying on adversarial distribution matching, thereby offering a principled direction for improving generative modeling of time-series data.
\end{abstract}

\begin{IEEEkeywords}
	Markov Chain Monte Carlo, Generative Adversarial Networks, Multivariate Time Series, Time Series Data Augmentation
\end{IEEEkeywords}

\section{Introduction}\label{sec:introduction}

Time series analysis plays a central role in numerous real-world applications, including stock market prediction, water consumption monitoring, electrical signal analysis, weather forecasting, and nitrous oxide (N$_2$O) emission modeling \cite{altunkaynak2017monthly, idrees2019prediction, cui2020stacked, Lin2022Stacked}. In recent years, substantial progress has been achieved in time-series forecasting with the development of advanced deep learning architectures such as Temporal Convolutional Networks (TCN) \cite{lea2017temporal} and Transformer-based models \cite{zeng2023transformers}. These models demonstrate strong capability in capturing long-range dependencies and complex temporal patterns.

However, the effectiveness of high-capacity deep learning models critically depends on the availability of large-scale, high-quality training data \cite{lim2021time}. In many real-world time-series applications, collecting sufficient data is expensive, time-consuming, or even impractical \cite{duong2021review, lin2024agriculture}. This data scarcity issue has become a major bottleneck limiting the performance and generalization ability of deep forecasting models.To address this challenge, data augmentation and synthetic data generation have emerged as promising solutions \cite{demir2021data}. In particular, deep generative models have demonstrated remarkable success in learning complex data distributions and producing realistic synthetic samples \cite{brophy2023generative}.

Among generative approaches, Generative Adversarial Networks (GANs) \cite{goodfellow2014generative} have shown exceptional performance in high-dimensional data generation, particularly in computer vision tasks \cite{elanwar2025generative}. Motivated by these advances, significant efforts have been devoted to extending GAN frameworks to time-series generation \cite{brophy2023generative}. Early attempts such as Recurrent Conditional GAN (RCGAN) \cite{festag2023medical} and QuantGAN \cite{wiese2020quant} replace convolutional architectures with recurrent neural networks (e.g., LSTM) or TCN modules to better capture temporal dependencies.  TimeGAN \cite{yoon2019time} further incorporates supervised objectives and explicitly models temporal transitions $p(x_t \mid x_{1:t-1})$ in a latent representation space, thereby improving temporal coherence. Furthermore, SigCWGAN \cite{liao2020conditional} constructs a theoretically grounded conditional generative model that captures autoregressive structures through signature-based Wasserstein distances.

Despite these encouraging developments, generating long time-series sequences in an autoregressive manner remains fundamentally challenging. In particular, sequential generation often suffers from distribution shift and error accumulation, where small deviations at early time steps are progressively amplified during generation \cite{wang2023aec}. Such distributional discrepancies can significantly degrade temporal consistency and predictive reliability, especially in regression-oriented tasks where preserving conditional dynamics is crucial.

\subsection{Motivations and Contributions}

Despite recent advances in GAN-based time-series generation, most existing methods primarily focus on matching the marginal distribution of observed sequences. However, in regression-oriented time-series modeling, preserving the underlying temporal dynamics $p(x_{t+1} \mid x_{1:t})$ is fundamentally more important than reproducing static distributional statistics. 

Autoregressive generation inherently induces distribution shift: during sequential generation, the model conditions on its own predictions rather than true observations. Even small deviations at early time steps may be amplified over long horizons, leading to accumulated errors and degradation of temporal consistency. While prior works attempt to mitigate this issue through architectural modifications or auxiliary supervised objectives, there remains a lack of explicit mechanisms that enforce dynamical consistency during generation.

To address this limitation, we propose a Markov Chain Monte Carlo (MCMC) framework for conditional generative adversarial networks (CGANs) that explicitly corrects distributional and dynamical deviations in generated time-series trajectories. Our contributions are summarized as follows:

\begin{itemize}
	\item We reformulate time-series augmentation as the problem of preserving temporal dynamics rather than merely matching marginal data distributions.
	\item We provide a theoretical analysis showing how distribution shift arises in conditional generative models during autoregressive generation.
	\item We propose a novel MCMC-based correction module that enforces consistency with empirical temporal transition statistics through a Markov chain satisfying detailed balance.
	\item We demonstrate that the proposed framework consistently improves temporal fidelity and predictive performance across multiple benchmark datasets, including Lorenz, Licor, Electricity Transformer Temperature (ETTh), and Influenza-Like Illness (ILI), and state-of-the-art GAN architectures, such as RCGAN, RGWGAN, TimeGAN, SigCWGAN, and AECGAN.
\end{itemize}

\subsection{Organization}

The remainder of this paper is organized as follows. Section \ref{sec:literature_review} reviews related work on the two major approaches to augmenting multivariate time series: traditional machine learning methods and GAN-based approaches. Section \ref{sec:preliminary} introduces the preliminary knowledge of CGANs for time-series augmentation, provides a rigorous mathematical analysis of the distribution shift that exists in time-series generation, and explains the conditions under which MCMC can effectively preserve temporal dynamics in synthetic time series. Section \ref{sec:problem_formulation} presents the proposed MCMC correction framework and its algorithmic details. Section \ref{sec:experiment_discussion} reports extensive experimental evaluations across multiple datasets and GAN architectures. Finally, Section \ref{sec:conclusion_future_work} concludes the paper and discusses future research directions.

\section{Literature Review}\label{sec:literature_review}

\subsection{Machine Learning Methods for Time Series Data Augmentation}

Table \ref{table:data_augmentaion_time_series} provides an overview of common data augmentation methods for time series data. Similar to data augmentation techniques employed in computer vision, most time series data augmentation approaches rely on random transformations, including operations like cropping \cite{takahashi2019data}, scaling \cite{rashid2019window}, and rotation \cite{okafor2018analysis}. The difficulty in implementing random transformation-based data augmentation arises from the inherent diversity among time series datasets. Not all transformations are suitable for every time series dataset. For instance, the application of jittering (adding noise) assumes that it is typical for the time series patterns in a specific dataset to include noise. While this assumption is valid for sensor data, audio recordings, or Electroencephalogram (EEG) data, it may not be applicable to time series data obtained from object contours.

\begin{table}[htbp]
	\centering
	\scriptsize
	\begin{threeparttable}
		\caption{Machine Learning Approaches for Time Series Data Augmentation}
		\begin{tabular}
			{m{2cm}m{6cm}}
			\toprule Family & Methods\\
			\midrule Random Transformation & Jittering \cite{um2017data, bishop1995training, an1996effects}, Rotation \cite{um2017data}, Scaling \cite{rashid2019window},  Magnitude Warping, Flipping \cite{rashid2019window}, Permutation, Slicing, Time Warping, Time Masking, Frequency Masking, Frequency Warping, Fourier Transform \\
			\midrule Pattern Mixing & Deviation From Mean (DFM) \cite{yeomans2019simulating} , Interpolation \cite{chawla2002smote}, Time Aligned Averaging \cite{petitjean2011global}, Guided Warping \cite{iwana2021time}, Equalized Mixture Data Augmentation (EMDA) \cite{takahashi2017aenet}, Stochastic Feature Mapping (SFM) \cite{cui2015data} \\
			
			\midrule Decomposition & Seasonal-Trend decomposition using Loess (STL) \cite{bergmeir2016bagging}, Independent Component Analysis (ICA) \cite{comon1994independent}, Empirical Mode Decomposition(EMD) \cite{huang1998empirical} \\
			
			\midrule Generative Models & Gaussian Trees \cite{cao2014parsimonious}, Posterior Sampling \cite{fruhwirth1994data}, Linear Model \cite{meng1999seeking}, Markov Chain \cite{hou2018sequence}, Local and Global Trend (LGT) \cite{smyl2016data}, Generating Time Series (GRATIS) \cite{kang2020gratis}, Long Short Term Memory (LSTM) \cite{smyl2016data}, Generative Adversarial Network (GAN) \cite{madhu2019data, yoon2019time}, WaveNet \cite{wang2019speech}, Autoencoders \cite{tu2018spatial} \\
			\bottomrule
		\end{tabular}
		\label{table:data_augmentaion_time_series}
	\end{threeparttable}
\end{table}

In addition to random transformations, researchers have also explored the utilization of intrinsic dataset information for synthesizing time series data. This can be accomplished through methods such as pattern mixing, decomposition, and generative models. Pattern mixing encompasses the practice of blending two or more existing time series to create fresh, unique patterns. The fundamental idea behind this technique is that by amalgamating distinct existing patterns, new samples can be crafted, combining distinctive features from each contributing pattern. Some common approaches to pattern mixing include Deviation From Mean (DFM) \cite{yeomans2019simulating}, interpolation \cite{chawla2002smote}, and Equalized Mixture Data Augmentation (EMDA) \cite{takahashi2017aenet}. Decomposition methods extract features from the dataset, such as trend components \cite{bergmeir2016bagging} and independent components \cite{eltoft2002data}, to generate new patterns based on these extracted features. Generative models take a somewhat indirect approach by utilizing the feature distributions within the dataset to produce new patterns. More recently, generative models employing neural networks, such as GANs \cite{goodfellow2014generative}, have gained prominence.

\subsection{Generative Adversarial Network for Time Series Data Augmentation}

GANs represent one of the most significant advancements in generative technology. Since their initial introduction in a seminal research paper \cite{goodfellow2014generative}, these frameworks have been applied extensively to generate synthetic sequences in diverse domains, including renewable scenarios \cite{chen2018model}, sensor data \cite{alzantot2017sensegen}, finance \cite{simonetto2018generating}, biosignals \cite{haradal2018biosignal}, smart grid data \cite{zhang2018generative}, and text \cite{zhang2016generating}. Recently, researchers have shifted their focus to synthesizing time series data, leading to the development of several GANs specifically designed for this purpose, as shown in Table \ref{table:data_augmentaion_gan}.

\begin{table}[htbp]
	\centering
	\scriptsize
	\begin{threeparttable}
		\caption{Generative Adversarial Network for Time Series Data Augmentation}
		\begin{tabular}
			{m{2cm}m{6cm}}
			\toprule Models & Characteristic\\
			\midrule WGAN \cite{arjovsky2017wasserstein, gulrajani2017improved} & Enhance the stability of learning, mitigate issues such as mode collapse, and offer meaningful learning curves that can aid in debugging and optimizing hyperparameters. \\
			\midrule C-RNN-GAN \cite{mogren2016c} & Work with continuous sequential data and apply it through training on a collection of classical music.  \\
			\midrule RGAN \cite{fekri2019generating} & Replace CNNs from image GANs with recurrent neural networks (RNNs) because of RNN's ability to capture temporal dependencies. \\
			\midrule RCGAN \cite{esteban2017real} & Utilize RNNs in both the generator and the discriminator, with both models being conditioned on auxiliary information. \\
			\midrule WaveGAN \cite{donahue2018adversarial} &  Capable of generating one-second audio waveform segments with global coherence; suitable for sound effect creation. \\
			\midrule TimeGAN \cite{yoon2019time} & Generate authentic time series data that blends the flexibility of the unsupervised approach with the control offered by supervised training. \\
			\midrule GT-GAN \cite{jeon2022gt} & Capable of generating both regular and irregular time series data. \\
			\midrule SigCWGAN \cite{liao2020conditional} & a stable, theoretically grounded conditional GAN for time-series generation that replaces adversarial discriminators with analytic Wasserstein distances in signature space, enabling efficient learning of high-order temporal dependencies. \\
			\midrule AEC-GAN \cite{wang2023aec} & AEC-GAN refines generated sequences (regarded as the conditioning variables) for high-quality long sequence generation auto-regressively. \\
			\bottomrule
		\end{tabular}
		\label{table:data_augmentaion_gan}
	\end{threeparttable}
\end{table}

The earliest model for generating time series data, C-RNN-GAN \cite{mogren2016c} and RGAN \cite{fekri2019generating} employs a standard GAN framework adapted for sequential data by utilizing RNN in both its generator and discriminator. Furthermore, the Recurrent Conditional GAN (RCGAN) \cite{esteban2017real} takes a similar approach but introduces minor architectural differences, such as removing the dependence on previous output while conditioning on additional input \cite{mirza2014conditional}. WaveGAN \cite{donahue2018adversarial} generates time series data by estimating the conditional probability of the preceding data using dilated causal convolution. TimeGAN \cite{yoon2019time} presents a framework that combines adversarial training of GANs with supervised training to predict $x_{i+1}$ from $x_i$, where $x_i$ and $x_{i+1}$ denote two multivariate time series values at time $t_i$ and $t_{i+1}$, respectively. Building upon TimeGAN, GT-GANs \cite{jeon2022gt} are capable of synthesizing both regular and irregular time series data. However, when dealing with time series data collected from dynamical systems, capturing the underlying dynamic behavior using these methods presents a significant challenge. This challenge arises because the dynamic behavior is implicitly encoded in the first-order difference or the first-order derivative over time.

\section{Preliminary Knowledge}\label{sec:preliminary}

\subsection{Conditional Generative Adversarial Networks for Time Series Augmentation}

Let $\{X_t\}_{t=1}^T \subset \mathbb{R}^d$ denote a multivariate time series. 
For time-series generation and forecasting, it is often more natural to model the
\emph{conditional distribution of future trajectories given past observations},
rather than the joint distribution over the entire sequence \cite{liao2020conditional}.

Using a sliding-window strategy, we define for each time index $t$:
\begin{equation}
	\begin{aligned}
	&\ X^{\text{past}}_t := (X_{t-p+1}, \dots, X_t) \in \mathbb{R}^{d \times p}, \\
	&\ X^{\text{future}}_t := (X_{t+1}, \dots, X_{t+q}) \in \mathbb{R}^{d \times q},
	\end{aligned}
\end{equation}
where $p$ and $q$ denote the lengths of the past and future windows, respectively.
The objective of conditional time-series generation is to learn the conditional law
\begin{equation}
	\mu(x) := \mathcal{L}\!\left( X^{\text{future}} \mid X^{\text{past}} = x \right).
\end{equation}

CGANs address this problem by conditioning both the generator and the discriminator on the observed past sequence \cite{mirza2014conditional}. 

In this context, let $X^{\text{past}} \in \mathbb{R}^{d \times p}$ denote the conditioning past window and
$X^{\text{future}} \in \mathbb{R}^{d \times q}$ denote the target future window. A conditional
generator $G_\theta$ produces a synthetic future sequence given the past and a latent noise vector
$z \sim p_z$:
\begin{equation}
	\hat{X}^{\text{future}} = G_\theta\!\left(X^{\text{past}}, z\right).
\end{equation}
A conditional discriminator $D_\phi$ takes as input a pair $(X^{\text{past}}, X^{\text{future}})$
and outputs the probability that the future window is real (rather than generated).The standard conditional GAN objective is formulated as the following minimax problem:
\begin{equation}
	\begin{aligned}
		& \min_{\theta}\max_{\phi}\ \mathcal{L}_{\text{cGAN}}(\theta,\phi) \\
		& = \mathbb{E}_{(X^{\text{past}},X^{\text{future}})\sim p_{\text{data}}}
		\big[\log D_\phi(X^{\text{past}}, X^{\text{future}})\big] \\
		& + \mathbb{E}_{X^{\text{past}}\sim p_{\text{data}},\ z\sim p_z}
		\big[\log\big(1-D_\phi(X^{\text{past}}, G_\theta(X^{\text{past}}, z))\big)\big].
		\label{eq:cgan_minimax}
	\end{aligned}
\end{equation}

In the time-series setting, conditional generators are often designed to respect an autoregressive structure, where future values are generated recursively based on past observations and stochastic innovations:
\begin{equation}
	X_{t+1} = g\!\left( X^{\text{past}}_t, \varepsilon_{t+1} \right),
\end{equation}
with $\varepsilon_t$ denoting independent noise variables. Multi-step future
trajectories are then obtained by iteratively applying the conditional generator in
a rolling-window fashion.

Compared with unconditional GANs that attempt to model the full joint distribution
$p(X_{1:T})$, CGANs focus on local temporal transitions
$p(X^{\text{future}} \mid X^{\text{past}})$. This formulation significantly reduces
the dimensionality of the learning problem, improves stability for long sequences,
and naturally aligns with forecasting and simulation tasks in dynamical systems.

\subsection{Distribution Shift in Generative Adversarial Networks}

GANs commonly suffer from distribution shift, as adversarial training under finite samples and limited model capacity does not guarantee full recovery of the data distribution. Prior work has shown that this manifests as mode collapse and boundary distortion, even when generated samples appear realistic \cite{arora2017generalization, santurkar2018classification, adler2018banach}. In time-series settings, such distributional mismatches are especially problematic due to accumulated errors in temporal dependencies \cite{wang2023aec}.

Generative models are typically trained on a finite set of samples drawn from an unknown data distribution, and thus operate on the induced empirical distribution rather than the true population distribution. This distinction is central to the analysis of distributional generalization in adversarial learning. We begin by defining the empirical distribution.

\begin{definition}
	Let $\mu$ be a probability distribution on $\mathbb{R}^d$, and let
	$\{x_i\}_{i=1}^{m}$ be $m$ independent samples drawn from $\mu$.
	The empirical distribution associated with these samples is defined as
	\[
	\hat{\mu} = \frac{1}{m} \sum_{i=1}^{m} \delta_{x_i},
	\]
	where $\delta_{x_i}$ denotes the Dirac measure centered at $x_i$.
\end{definition}

\begin{proposition}
	Let $\mu = \mathcal{N}(0, \sigma^{2} I_d)$ be a Gaussian distribution on
	$\mathbb{R}^d$, and let $\hat{\mu}$ denote the empirical distribution constructed
	from $m$ i.i.d.\ samples drawn from $\mu$. Then,
	\[
	d_{\mathrm{JS}}(\mu, \hat{\mu}) = \log 2,
	\qquad
	d_{\mathrm{W}}(\mu, \hat{\mu}) \ge 1.1,
	\]
	where $d_{\mathrm{JS}}(\cdot,\cdot)$ denotes the Jensen--Shannon divergence,
	$d_{\mathrm{W}}(\cdot,\cdot)$ denotes the Wasserstein distance, and
	$c > 0$ is a constant independent of $m$.
\end{proposition}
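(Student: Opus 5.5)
The argument treats the two claims separately. The Jensen--Shannon part is deterministic and rests on mutual singularity. The Wasserstein part is a concentration argument. It requires the normalization of Arora et al.\ \cite{arora2017generalization}, which I take to be intended: $\sigma^2 = 1/d$, so that $\mathbb{E}\|x\|^2 = 1$. It also requires the bound to hold with high probability in the regime $m \le e^{cd}$. This is where the constant $c>0$ in the statement enters, and I read the statement in this sense.

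\emph{Jensen--Shannon part.} Let $S=\{x_1,\dots,x_m\}$, a finite set. Then $\mu(S)=0$ because $\mu$ is absolutely continuous, while $\hat\mu(S)=1$, so $\mu\perp\hat\mu$. Set $M=\tfrac12(\mu+\hat\mu)$. On $S^c$ we have $d\mu/dM = 2$, and on $S$ we have $d\hat\mu/dM=2$. Hence $\mathrm{KL}(\mu\|M)=\int_{S^c}\log 2\,d\mu=\log 2$, and likewise $\mathrm{KL}(\hat\mu\|M)=\log 2$. Therefore $d_{\mathrm{JS}}=\tfrac12(\log 2+\log 2)=\log 2$, for every $m$ and every $\sigma$.

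\emph{Wasserstein part.} First I reduce to a nearest-sample distance. Any coupling $\pi$ of $(\mu,\hat\mu)$ sends $x$ to some point of $S$, so
\[
d_{\mathrm{W}}(\mu,\hat\mu)\ \ge\ \mathbb{E}_{x\sim\mu}\Big[\min_{i}\|x-x_i\|\Big].
\]
Second, I condition on a good event for the samples. By $\chi^2$ concentration, $\|x_i\|^2\in[1-\epsilon,1+\epsilon]$ for all $i$ with probability at least $1-2m e^{-c_1\epsilon^2 d}$. Third, fix such samples and draw an independent $x\sim\mu$. Then
\[
\|x-x_i\|^2=\|x\|^2+\|x_i\|^2-2\langle x,x_i\rangle .
\]
Here $\|x\|^2$ concentrates at $1$, and $\langle x,x_i\rangle\sim\mathcal N(0,\|x_i\|^2/d)$ is at most $\epsilon$ except with probability $e^{-c_2\epsilon^2 d}$. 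So $\|x-x_i\|^2\ge 2-4\epsilon$ for each $i$ except on an event of probability $e^{-c_3 d}$. A union bound over $i$ gives
\[
\min_i\|x-x_i\|\ge\sqrt{2-4\epsilon}\quad\text{with probability at least } 1-m e^{-c_3 d}.
\]
Taking $\epsilon$ small enough that $\sqrt{2-4\epsilon}\ge 1.2$, I get $\mathbb{E}\min_i\|x-x_i\|\ge 1.2\,(1-me^{-c_3d})\ge 1.1$ whenever $m\le e^{cd}$ with $c<c_3$ and $d$ is large enough.

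The main obstacle is making the constants explicit and reconciling the statement with these hypotheses. Its literal form omits the scaling $\sigma^2=1/d$ and the restriction on $m$. Without them the bound $d_{\mathrm{W}}\ge 1.1$ is false: for fixed $d$ and $m\to\infty$, $d_{\mathrm{W}}(\mu,\hat\mu)\to 0$. I would therefore state these assumptions explicitly. I would then verify the three tail bounds with concrete constants, using Laurent--Massart for $\chi^2$ and the Gaussian tail for the inner products, so that $1.2$ and then $1.1$ are genuinely reached.
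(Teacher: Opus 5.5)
Your proposal is correct and follows essentially the same route as the paper. Mutual singularity gives $d_{\mathrm{JS}}=\log 2$. For the Wasserstein part you lower-bound the earth-mover cost by the nearest-sample distance, then use Gaussian concentration plus a union bound to get $\Pr[\min_i\|x-x_i\|\ge 1.2]\ge 1-me^{-\Omega(d)}$. This rests on the normalization $\sigma^2=1/d$, which the paper's proof also silently adopts when it draws $y\sim\mathcal{N}(0,d^{-1}I_d)$.

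Your extra care is needed, not cosmetic. Conditioning on the typical event $\|x_i\|^2\approx 1$ repairs the paper's claim that the samples may be ``fixed arbitrarily''. Samples at the origin would give distances concentrating near $1$, not $1.2$. Your explicit regime $m\le e^{cd}$ with $d$ large is exactly what the paper's step $1-m\exp(-\Omega(d))\ge 1-o(1)$ tacitly requires.
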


\begin{proof}
	For the Jensen--Shannon divergence, observe that $\mu$ is absolutely continuous
	with respect to Lebesgue measure, while $\hat{\mu}$ is a discrete distribution
	supported on a finite set. Consequently, the two measures are mutually singular,
	and the Jensen--Shannon divergence attains its maximal value,
	$d_{\mathrm{JS}}(\mu, \hat{\mu}) = \log 2$.
	
	For the Wasserstein distance, let $x_1, x_2, \ldots, x_m$ denote the empirical
	samples, which are fixed arbitrarily. Let $y \sim \mathcal{N}(0, d^{-1} I_d)$.
	By standard concentration inequalities for Gaussian random vectors and a union
	bound over the $m$ sample points, we have
	\[
	\begin{aligned}
	& \Pr\!\left[ \forall i \in [m], \; \|y - x_i\| \ge 1.2 \right] \\
	&\ \;\ge\; 1 - m \exp(-\Omega(d)) \;\ge\; 1 - o(1).
	\end{aligned}
	\]
	Using the earth-mover (optimal transport) interpretation of the Wasserstein
	distance, it follows that transporting mass from $\mu$ to $\hat{\mu}$ incurs a
	cost of at least $1.2$ whenever $y$ is at distance at least $1.2$ from all
	$x_i$. Therefore,
	\[
	d_{\mathrm{W}}(\mu, \hat{\mu})
	\;\ge\;
	1.2 \cdot \Pr\!\left[ \forall i \in [m], \; \|y - x_i\| \ge 1.2 \right]
	\;\ge\; 1.1.
	\]
\end{proof}

In addition to unconditional generative modeling, distribution shift arises naturally in CGANs, which are widely used for regression-based generation and time series modeling. In a CGAN,
the generator is trained to model a conditional distribution of the form $p(y \mid x)$, where the conditioning variable $x$ is drawn from the data distribution during training.

However, during sequential generation, the conditioning variable may itself be generated by the model or sampled from a distribution that differs from the original data distribution. This mismatch induces a distribution shift in the generated outputs, even if the conditional generator is accurately learned.

\begin{proposition}[Distribution shift in conditional GANs]
	\label{prop:cgan_shift_rigorous}
	Let $(X,Y)$ be random variables on measurable spaces $(\mathcal{X},\mathcal{B}_{\mathcal{X}})$
	and $(\mathcal{Y},\mathcal{B}_{\mathcal{Y}})$ with joint distribution
	$p_{\mathrm{data}}(x,y)$, marginal $p_{\mathrm{data}}(x)$, and conditional
	$p_{\mathrm{data}}(y\mid x)$. Let $p_\theta(y\mid x)$ be a learned conditional
	distribution satisfying
	$p_\theta(\cdot \mid x)=p_{\mathrm{data}}(\cdot \mid x)$ for
	$p_{\mathrm{data}}$-almost every $x$.
	
	For any alternative conditioning distribution $q(x)$, define the induced output
	marginal
	\[
	q_\theta(B) := \int_{\mathcal{X}} p_\theta(B \mid x)\, q(dx),
	\quad B \in \mathcal{B}_{\mathcal{Y}},
	\]
	and the true output marginal
	\[
	p_{\mathrm{data}}(B) := \int_{\mathcal{X}} p_{\mathrm{data}}(B \mid x)\,
	p_{\mathrm{data}}(dx).
	\]
	If there exists a measurable set $B \in \mathcal{B}_{\mathcal{Y}}$ such that the
	function $g_B(x):=p_{\mathrm{data}}(B\mid x)$ satisfies
	\[
	\int_{\mathcal{X}} g_B(x)\, q(dx) \neq \int_{\mathcal{X}} g_B(x)\, p_{\mathrm{data}}(dx),
	\]
	then $q_\theta \neq p_{\mathrm{data}}$ as probability measures on $\mathcal{Y}$.
	Moreover,
	\[
	d_{\mathrm{TV}}(q_\theta, p_{\mathrm{data}})
	\;\ge\;
	\left|\int_{\mathcal{X}} g_B(x)\, q(dx) - \int_{\mathcal{X}} g_B(x)\, p_{\mathrm{data}}(dx)\right|.
	\]
\end{proposition}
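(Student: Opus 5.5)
The plan is to evaluate both marginals on the distinguished set $B$ and compare them directly; the total variation bound then follows from its definition as a supremum over measurable sets.

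\emph{Step 1: identify $q_\theta$ on $B$.} By definition $q_\theta(B)=\int_{\mathcal{X}} p_\theta(B\mid x)\,q(dx)$. We want to replace $p_\theta(B\mid x)$ by $g_B(x)=p_{\mathrm{data}}(B\mid x)$. The hypothesis only gives $p_\theta(\cdot\mid x)=p_{\mathrm{data}}(\cdot\mid x)$ for $p_{\mathrm{data}}$-almost every $x$. So the replacement inside the $q$-integral is valid when $q\ll p_{\mathrm{data}}(x)$, because the exceptional null set is then also $q$-null.

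This is the main obstacle. Without absolute continuity, $q$ may charge regions where $p_\theta$ is unconstrained, and the argument breaks. I would handle it by reading the proposition with $q\ll p_{\mathrm{data}}$ as an implicit standing assumption. Equivalently, one can read $g_B$ in the hypothesis as $x\mapsto p_\theta(B\mid x)$, which agrees with $p_{\mathrm{data}}(B\mid x)$ $p_{\mathrm{data}}$-a.e.; this makes the right-hand integral unchanged. I would also check measurability of $x\mapsto p_\theta(B\mid x)$, which holds because $p_\theta$ is a Markov kernel. Under either reading,
\[
q_\theta(B)=\int_{\mathcal{X}} g_B(x)\,q(dx).
\]

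\emph{Step 2: identify $p_{\mathrm{data}}$ on $B$.} By the disintegration of the joint law,
\[
p_{\mathrm{data}}(B)=\int_{\mathcal{X}} g_B(x)\,p_{\mathrm{data}}(dx).
\]
The hypothesis says these two integrals differ, so $q_\theta(B)\neq p_{\mathrm{data}}(B)$, and therefore $q_\theta\neq p_{\mathrm{data}}$ as measures on $\mathcal{Y}$.

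\emph{Step 3: the quantitative bound.} Use $d_{\mathrm{TV}}(P,Q)=\sup_{A\in\mathcal{B}_{\mathcal{Y}}}|P(A)-Q(A)|$. Taking $A=B$ gives
\[
d_{\mathrm{TV}}(q_\theta,p_{\mathrm{data}})\ge |q_\theta(B)-p_{\mathrm{data}}(B)|,
\]
and this equals the displayed difference of integrals by Steps 1 and 2. If one instead uses the convention $d_{\mathrm{TV}}=\sup_{\|f\|_\infty\le 1}\bigl|\int f\,dP-\int f\,dQ\bigr|$ (without the factor $\tfrac12$), the bound only improves. So I would fix the supremum-over-sets convention and note that the result is robust to this choice.
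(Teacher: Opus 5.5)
Your proof is the same as the paper's: evaluate $q_\theta$ and $p_{\mathrm{data}}$ on the distinguished set $B$, conclude that they differ, and take $A=B$ in the supremum that defines $d_{\mathrm{TV}}$. The paper substitutes $p_\theta(B\mid x)=g_B(x)$ inside the $q$-integral without comment, so your point that this step needs $q$ to be absolutely continuous with respect to the $x$-marginal of $p_{\mathrm{data}}$ (or your alternative reading of $g_B$) correctly supplies a hypothesis that the paper's proof silently uses and without which the statement can fail.
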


\begin{proof}
	Since $p_\theta(\cdot\mid x)=p_{\mathrm{data}}(\cdot\mid x)$ for
	$p_{\mathrm{data}}$-a.e.\ $x$, we have $p_\theta(B\mid x)=p_{\mathrm{data}}(B\mid x)=g_B(x)$
	for $p_{\mathrm{data}}$-a.e.\ $x$. Therefore, for any measurable $B\subseteq\mathcal{Y}$,
	\[
	q_\theta(B)
	=
	\int_{\mathcal{X}} p_\theta(B\mid x)\,q(dx)
	=
	\int_{\mathcal{X}} g_B(x)\,q(dx),
	\]
	and similarly,
	\[
	p_{\mathrm{data}}(B)
	=
	\int_{\mathcal{X}} p_{\mathrm{data}}(B\mid x)\,p_{\mathrm{data}}(dx)
	=
	\int_{\mathcal{X}} g_B(x)\,p_{\mathrm{data}}(dx).
	\]
	By the assumed inequality of these two integrals, we obtain
	$q_\theta(B)\neq p_{\mathrm{data}}(B)$, hence $q_\theta \neq p_{\mathrm{data}}$ as measures.
	
	For the total variation bound, recall
	$d_{\mathrm{TV}}(\nu_1,\nu_2) = \sup_{A\in\mathcal{B}_{\mathcal{Y}}} |\nu_1(A)-\nu_2(A)|$.
	Thus,
	\[
	\begin{aligned}
	&\ d_{\mathrm{TV}}(q_\theta, p_{\mathrm{data}})
	\ge |q_\theta(B) - p_{\mathrm{data}}(B)| \\
	&\ = \left|\int_{\mathcal{X}} g_B(x)\, q(dx) - \int_{\mathcal{X}} g_B(x)\, 	p_{\mathrm{data}}(dx)\right|.
	\end{aligned}
	\]
\end{proof}

From a dynamical systems perspective, this form of distribution shift can be interpreted as a perturbation of the initial condition. In time series conditional GANs, the conditioning variable represents the current system state, and future outputs are generated through repeated conditional transitions. Consequently, even small deviations in the conditioning distribution may be amplified by the underlying dynamics, leading to substantial divergence in future states.

\subsection{Markov Chain Monte Carlo}

MCMC methods sample from a target distribution by
constructing a Markov chain whose stationary distribution coincides with the
target.

Let $\mathcal{X}$ be a measurable state space. A Markov chain
$\{X_t\}_{t \ge 0}$ is defined by a transition kernel $P(x, A)$ such that
\[
\Pr(X_{t+1} \in A \mid X_t = x) = P(x, A),
\quad \forall A \subseteq \mathcal{X}.
\]

A probability measure $\pi$ on $\mathcal{X}$ is stationary if
\[
\pi(A) = \int_{\mathcal{X}} P(x, A)\,\pi(dx),
\quad \forall A \subseteq \mathcal{X}.
\]

A sufficient condition for stationarity is the detailed balance condition,
\[
\pi(x)\, P(x, dx') = \pi(x')\, P(x', dx),
\quad \forall x, x' \in \mathcal{X}.
\]

The Metropolis--Hastings algorithm constructs such a transition kernel. Given a
current state $x$, a proposal $x'$ is drawn from a proposal distribution
$q(x' \mid x)$ and accepted with probability
\[
\alpha(x, x') = \min\!\left(1,
\frac{\pi(x')\, q(x \mid x')}
{\pi(x)\, q(x' \mid x)}
\right).
\]
If the proposal is rejected, the Markov chain remains at $x$. Under standard
conditions on the proposal distribution $q$, the resulting Markov chain admits
$\pi$ as its stationary distribution.

\begin{proposition}
	If a Markov transition kernel $P$ satisfies detailed balance with respect to
	$\pi$, then $\pi$ is a stationary distribution of the Markov chain.
\end{proposition}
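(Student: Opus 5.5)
The approach is to read detailed balance as an equality of measures on the product space $\mathcal{X}\times\mathcal{X}$ and then project onto the second coordinate. Define the joint measure $\Gamma(dx,dx') := \pi(dx)\,P(x,dx')$ on $\mathcal{X}\times\mathcal{X}$. This is the law of $(X_t,X_{t+1})$ when $X_t\sim\pi$. In measure-theoretic form, the detailed balance condition stated above says that $\Gamma$ is symmetric: for all measurable $A,B\subseteq\mathcal{X}$,
\[
\int_{A} P(x,B)\,\pi(dx) \;=\; \int_{B} P(x',A)\,\pi(dx').
\]
The first step is to record this set-function formulation explicitly. The pointwise expression $\pi(x)P(x,dx')=\pi(x')P(x',dx)$ in the paper is only meaningful after integrating over rectangles $A\times B$, so it is to be understood this way.

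Next, specialize to $A=\mathcal{X}$. The left-hand side becomes $\int_{\mathcal{X}} P(x,B)\,\pi(dx)$, which is exactly the one-step push-forward $(\pi P)(B)$ appearing in the stationarity definition. The right-hand side becomes $\int_B P(x',\mathcal{X})\,\pi(dx')$. Since $P(x',\cdot)$ is a probability measure for every $x'$, we have $P(x',\mathcal{X})=1$, so the right-hand side equals $\pi(B)$. Hence $\int_{\mathcal{X}} P(x,B)\,\pi(dx)=\pi(B)$ for every measurable $B$, which is precisely the stationarity condition in the paper.

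If $\pi$ and $P$ have densities, the same argument reads
\[
\int \pi(x)\,p(x,x')\,dx=\int \pi(x')\,p(x',x)\,dx=\pi(x').
\]
If $P$ has an atom at $x$, as Metropolis--Hastings does through rejection, the rectangle formulation handles it automatically, and no separate case is needed.

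The only real obstacle is interpretive rather than computational. The paper writes detailed balance pointwise with densities, and I need to justify that this means symmetry of $\Gamma$ on rectangles, so that Fubini/Tonelli applies; the integrands are nonnegative and bounded by $1$. Once that is fixed, the proof is the two-line computation above, and I would state that the argument holds for any measurable state space.
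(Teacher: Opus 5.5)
Your proof is correct and follows the same route as the paper. You write $(\pi P)(B)$ as the mass that $\pi(dx)\,P(x,dx')$ assigns to $\mathcal{X}\times B$, use detailed balance to swap the roles of $x$ and $x'$, and integrate out the first coordinate using $P(x',\mathcal{X})=1$. Your rectangle formulation $\Gamma(A\times B)=\Gamma(B\times A)$ simply makes precise the step that the paper summarizes as ``applying detailed balance and integrating out $x$.''
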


\begin{proof}
	For any measurable set $A \subseteq \mathcal{X}$,
	\[
	\int_{\mathcal{X}} P(x,A)\,\pi(dx)
	= \int_{\mathcal{X}} \int_{\mathcal{X}} \mathbf{1}_A(x')\,\pi(dx)\,P(x,dx').
	\]
	Applying detailed balance and integrating out $x$ yields
	\[
	\int_{\mathcal{X}} \mathbf{1}_A(x')\,\pi(dx') = \pi(A),
	\]
	which proves stationarity.
\end{proof}

\begin{proposition}[Stationary and reversible Markov chains satisfy detailed balance]
	Let $\{X_t\}$ be a time-homogeneous Markov chain with transition kernel $P$.
	Assume it is stationary with marginal distribution $\pi$ and reversible, i.e.,
	$(X_t,X_{t+1}) \stackrel{d}{=} (X_{t+1},X_t)$. Then $P$ satisfies detailed
	balance with respect to $\pi$:
	\[
	\pi(dx)\,P(x,dy) = \pi(dy)\,P(y,dx).
	\]
\end{proposition}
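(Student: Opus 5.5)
The plan is to identify both sides of the claimed identity as the joint law of a pair of consecutive states, and then read off detailed balance directly from the reversibility assumption. Throughout, we interpret the measure identity $\pi(dx)\,P(x,dy) = \pi(dy)\,P(y,dx)$ in its rigorous form: equality of two measures on the product space $\mathcal{X}\times\mathcal{X}$. Concretely, we must show
\[
\int_A \pi(dx)\,P(x,B) \;=\; \int_B \pi(dy)\,P(y,A)
\]
for all measurable $A,B\subseteq\mathcal{X}$. Since measurable rectangles $A\times B$ form a $\pi$-system generating the product $\sigma$-algebra, and both sides define finite measures of total mass one, equality on rectangles suffices by the $\pi$--$\lambda$ (Dynkin) uniqueness theorem.

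\textbf{First}, we compute the joint law of $(X_t,X_{t+1})$. By stationarity, $X_t\sim\pi$. By the Markov property with time-homogeneous kernel $P$,
\[
\Pr(X_t\in A,\,X_{t+1}\in B)=\mathbb{E}\big[\mathbf{1}_A(X_t)\,P(X_t,B)\big]=\int_A P(x,B)\,\pi(dx),
\]
where the middle equality conditions on $X_t$. Hence the measure $\pi(dx)\,P(x,dy)$ is exactly the law of $(X_t,X_{t+1})$.

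\textbf{Second}, we apply reversibility. The assumption $(X_t,X_{t+1})\stackrel{d}{=}(X_{t+1},X_t)$ gives
\[
\Pr(X_t\in A,\,X_{t+1}\in B)=\Pr(X_{t+1}\in A,\,X_t\in B)=\Pr(X_t\in B,\,X_{t+1}\in A).
\]
Applying the formula from the first step with the roles of $A$ and $B$ swapped, the right-hand side equals $\int_B P(y,A)\,\pi(dy)$. This is precisely the rectangle identity stated above.

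\textbf{Finally}, we extend from rectangles to all measurable subsets of $\mathcal{X}\times\mathcal{X}$ by Dynkin's theorem, which yields the stated measure identity.

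The only step requiring any care is the measure-theoretic interpretation. Since $\pi$ and $P$ need not have densities, the statement must be read as equality of joint measures rather than as a pointwise equality of densities. The uniqueness-of-extension argument handles this. The probabilistic content itself is immediate from stationarity and reversibility. Note that Proposition~3 (detailed balance implies stationarity) is not needed here; this result runs in the converse direction under the additional reversibility hypothesis.
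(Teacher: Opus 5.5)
Your proof is correct and follows the same route as the paper. Both identify $\pi(dx)\,P(x,dy)$ as the joint law of $(X_t,X_{t+1})$ via stationarity and the Markov property, then use reversibility to swap the coordinates; you additionally make the paper's infinitesimal notation rigorous by checking the identity on rectangles $A\times B$ and extending by Dynkin's $\pi$--$\lambda$ theorem, a step the paper leaves implicit.
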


\begin{proof}
	Define the joint law of two consecutive states under stationarity:
	\[
	\eta(dx,dy) := \mathbb{P}(X_t \in dx, X_{t+1} \in dy).
	\]
	By the Markov property and stationarity, this joint law factorizes as
	\[
	\begin{aligned}
	\eta(dx,dy) &\ = \mathbb{P}(X_t \in dx)\,\mathbb{P}(X_{t+1}\in dy \mid X_t=x) \\
	&\ = \pi(dx)\,P(x,dy).
	\end{aligned}
	\]
	Reversibility implies $\eta(dx,dy)=\eta(dy,dx)$. Therefore,
	\[
	\pi(dx)\,P(x,dy) = \eta(dx,dy) = \eta(dy,dx) = \pi(dy)\,P(y,dx),
	\]
	which is exactly the detailed balance condition.
\end{proof}

\section{Problem Formulation}\label{sec:problem_formulation}

Let $\mathcal{X} = (\mathbb{R}^d)^T$ denote the space of multivariate time series of length $T$ and dimension $d$. Each time series $x=(x_1,\ldots,x_T)\in\mathcal{X}$ is viewed as a finite trajectory generated by an unknown (possibly stochastic) dynamical system. Let $\mu$ denote the population distribution over $\mathcal{X}$ induced by this system, and let $\mathcal{D}=\{x_i\}_{i=1}^m$ be a finite dataset drawn i.i.d.\ from $\mu$, inducing the empirical distribution $\hat{\mu}$.

In time series augmentation, the primary objective is to preserve temporal dynamics rather than merely match static distributions. Temporal dynamics characterize the evolution of system states over time and can be formalized through conditional transition laws of the form $\mu(X_{t+1}\mid X_{1:t})$, $t=1,\ldots,T-1$. A synthetic distribution is said to preserve temporal dynamics if these conditional laws are well approximated.

Let $G_\theta$ be a generative model trained on $\hat{\mu}$, inducing a distribution $\mu_\theta$ over $\mathcal{X}$. Due to finite-sample effects and sequential generation, samples drawn from $\mu_\theta$ may deviate from $\mu$, particularly in their temporal dependence structure. We therefore seek a correction mechanism that improves temporal fidelity while remaining computationally tractable.

To this end, we construct a Markov chain on $\mathcal{X}$ with transition kernel $P$ and target distribution $\pi$, designed to encode desired statistical and dynamical properties. The kernel $P$ is constructed to satisfy detailed balance with respect to $\pi$, ensuring that $\pi$ is a stationary distribution of the Markov chain. Starting from an initial sample $X^{(0)}\sim\mu_\theta$, the refined sample $X^{(K)}$ induces a refined distribution $\tilde{\mu}_\theta$.

\subsection{An MCMC-Based Correction Framework for GANs}\label{sec:mcmc_framework}

\begin{figure}[htbp]
	\centering
	\includegraphics[width=0.8\linewidth]{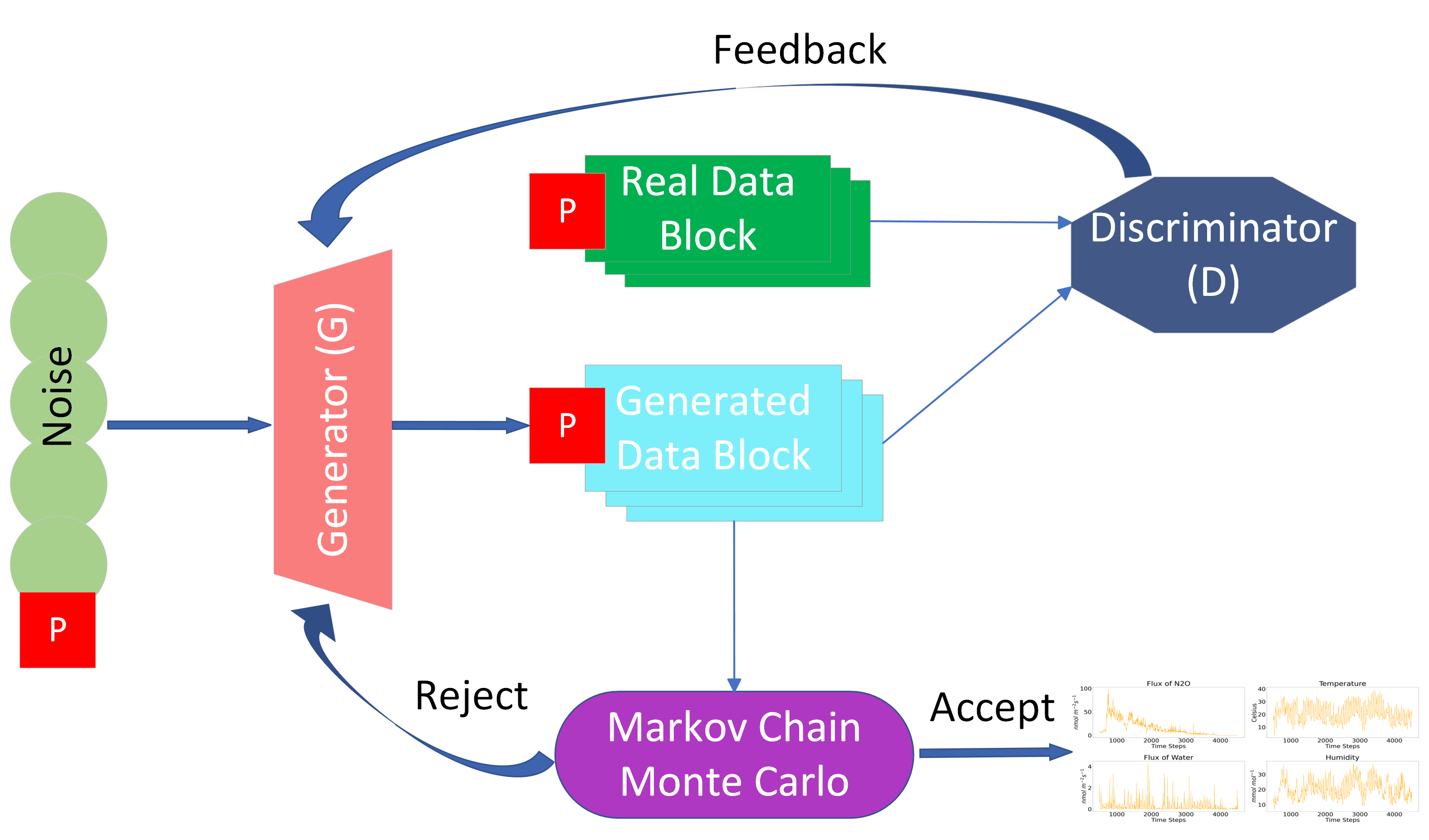}
	\caption{An MCMC-Based Correction Framework for GANs}
	\label{fig:architecture}
\end{figure}

As shown in Figure \ref{fig:architecture}, in general the proposed framework extends a conditional GAN by integrating a MCMC–based correction module to improve the temporal consistency of generated future time series. 

In this framework, We adopt a conditional GAN architecture, where generation is conditioned on a past time series segment $p$, following standard practice. The generator produces future time series blocks $q$ conditioned on $p$, and the discriminator distinguishes real and generated future blocks under the same conditioning \cite{mirza2014conditional, liao2020conditional, wang2023aec}.

In the MCMC module, a Metropolis–Hastings–type algorithm is used to select candidate samples generated by the generator. By constructing a transition rule that satisfies detailed balance with respect to a target distribution encoding desired temporal properties, the correction process corrects both distributional and dynamical deviations introduced by the generator, as shown in Algorithm \ref{alg:mcmc_algorithm}. The final output consists of MCMC-corrected future time series that better preserve statistical realism and temporal dynamics relative to the original data.

\subsection{Modified Metropolis-Hastings Algorithm}

\begin{algorithm}[htbp]
	\caption{Markov Chain Monte Carlo Module}
	\label{alg:mcmc_algorithm}
	\begin{algorithmic}[1]
		\STATE \textbf{Input:} A time series source $S$; a prior probability distribution $\pi(\theta)$; and a conditional $GAN$ model to generate candidate samples,a weighting factor $\beta$ regulating the relative influence of real and generated components.
		\STATE \textbf{Output:} A generated time series $glst$.
		\STATE \textbf{Initialization:} Initialize $\theta$ = $S[1] - S[0]$; an empty time series list $glst$; and a vector $v$ to store the last item added into candidate time series.
		\WHILE{LENGTH($glst$) != LENGTH($S$)}
		\STATE Draw the actual sample $s_{t} = S(t)$ 
		\WHILE{new sample is not added to $glst$}
		\STATE Generate a candidate sample $q_{t+1} = GAN(Z, p_t)$
		\IF{the length of $glst$ is zero}
		\STATE $\theta' \leftarrow q_{0} - s_{0}$
		\ELSE
		\STATE $\theta' \leftarrow (1 - \beta) (q_{t+1} - v) + \beta (q_{t+1} - s_{t})$
		\ENDIF
		\STATE $\gamma \leftarrow \min\!\left\{\dfrac{\pi(\theta')}{\pi(\theta) + \epsilon},\, 1 \right\}$
		\STATE Draw $u \sim \text{Uniform}(0,1)$
		\IF{$u \le \gamma$}
		\STATE $\theta \leftarrow \theta'$
		\STATE $v \leftarrow q_{t+1}$
		\STATE Add the selected sample $q_{t+1}$ to the time series list $tlst$
		\STATE \textbf{break}
		\ENDIF
		\ENDWHILE
		\ENDWHILE
		\RETURN time series $glst$
	\end{algorithmic}
\end{algorithm}

Algorithm \ref{alg:mcmc_algorithm} describes an MCMC-based correction procedure that constructs a synthetic time series sequentially. At each timestamp, the generator proposes candidate future values conditioned on the past, while an MCMC filter selects proposals whose local temporal changes are consistent with empirical dynamics observed in the real data.

The Markov chain is defined on an auxiliary state space $\Theta$, where each state
\[
\theta_t = x_t - x_{t-1}
\]
represents a first-order difference of the time series. The target distribution $\pi(\theta)$ is defined as the empirical joint probability distribution of first-order differences estimated from the real time series. This distribution captures local temporal dynamics and serves as the reference distribution guiding the correction process.

At time step $t$, given the current state $\theta$, a conditional GAN generates a
candidate future value
\[
q_{t+1} \sim GAN(\cdot \mid p_t),
\]
where $p_t$ denotes the past time series used for conditioning. Rather than proposing $\theta'$ directly, the algorithm derives a candidate discrepancy variable $\theta'$ from the generated value $q_{t+1}$, the real observation $s_t$, and the previously accepted synthetic value $v$.

Specifically, the candidate discrepancy is computed as
\[
\theta' \leftarrow (1-\beta)(q_{t+1} - v) + \beta(q_{t+1} - s_t),
\]
with a special initialization step when no synthetic value has yet been accepted.

Following the Metropolis--Hastings principle, the candidate update is evaluated using the target distribution $\pi(\theta)$. The transition from the current state $\theta$ to the candidate state $\theta'$ is accepted with probability
\[
\alpha(\theta,\theta') =
\min\!\left\{
1,\,
\frac{\pi(\theta')}{\pi(\theta) + \epsilon}
\right\}.
\]
A uniform random variable $u \sim \mathrm{Uniform}(0,1)$ is drawn, and the update is accepted if $u \le \alpha(\theta,\theta')$; otherwise, the state remains unchanged and a new candidate is generated.

Upon acceptance, the state is updated as $\theta \leftarrow \theta'$, the accepted value $q_{t+1}$ is appended to the generated time series, and the algorithm advances to the next timestamp. If the proposal is rejected, the GAN generates another candidate sample for evaluation.

\section{Experiments and Discussion}\label{sec:experiment_discussion}

\begin{table*}[htbp]
	\centering
	\small
	\begin{threeparttable}
		\caption{Summary of Evaluation Metrics}
		\label{tab:metrics_formula_summary}
		\begin{tabular}{m{2.8cm}m{6.5cm}>{\centering\arraybackslash}m{6.5cm}}
			\toprule
			\textbf{Metric} & \textbf{Description} & \textbf{Definition} \\
			\midrule
			
			ACF Error
			&
			Autocorrelation Function (ACF) measures the correlation between a time series and its lagged values, capturing temporal dependence and memory effects. The ACF error quantifies discrepancies in these correlations between real and generated data.
			&
			$\displaystyle
			\mathcal{L}_{\mathrm{ACF}}
			=
			\frac{1}{K}
			\sum_{k=1}^{K}
			\left|
			\mathrm{ACF}_{\text{real}}(k)
			-
			\mathrm{ACF}_{\text{gen}}(k)
			\right|
			$
			\\
			
			\midrule
			Skewness Error
			&
			Skewness measures the asymmetry of a data distribution around its mean. Preserving skewness ensures that generated data reproduces asymmetric behaviors such as long-tailed or bursty patterns.
			&
			$\displaystyle
			\mathcal{L}_{\mathrm{Skew}}
			=
			\left|
			\frac{\mathbb{E}[(x-\mu)^3]}{\sigma^3}
			-
			\frac{\mathbb{E}[(\hat{x}-\hat{\mu})^3]}{\hat{\sigma}^3}
			\right|
			$
			\\
			
			\midrule
			Kurtosis Error
			&
			Kurtosis measures the heaviness of distribution tails and the sharpness of its peak. Matching kurtosis is important for reproducing extreme events and heavy-tailed phenomena.
			&
			$\displaystyle
			\mathcal{L}_{\mathrm{Kurt}}
			=
			\left|
			\frac{\mathbb{E}[(x-\mu)^4]}{\sigma^4}
			-
			\frac{\mathbb{E}[(\hat{x}-\hat{\mu})^4]}{\hat{\sigma}^4}
			\right|
			$
			\\
			
			\midrule
			$R^2$
			&
			Measures the proportion of variance in the real time series that is explained by the generated data, reflecting global similarity between signals.
			&
			$\displaystyle
			R^2
			=
			1
			-
			\frac{
				\sum_{t}(y_t - \hat{y}_t)^2
			}{
				\sum_{t}(y_t - \bar{y})^2
			}
			$
			\\
			
			\midrule
			Discriminative Score
			&
			Evaluates how difficult it is for a trained classifier to distinguish real sequences from generated ones, reflecting the realism of synthetic data.
			&
			$\displaystyle
			\mathrm{DS}
			=
			\left|
			\mathrm{Acc}(D)
			-
			0.5
			\right|
			$
			\\
			
			\midrule
			Predictive Score
			&
			Assesses whether predictive models trained on synthetic data can generalize to real data, indicating preservation of temporal dynamics and conditional dependencies.
			&
			$\displaystyle
			\mathrm{PS}
			=
			\mathbb{E}
			\left[
			\| x_{t+1} - \hat{x}_{t+1} \|^2
			\right]
			$
			\\
			
			\bottomrule
		\end{tabular}
		
		\begin{tablenotes}[flushleft]
			\footnotesize
			\item Notes: $x$ and $\hat{x}$ denote real and generated samples, respectively; $\mu,\sigma$ and $\hat{\mu},\hat{\sigma}$ are the corresponding sample means and standard deviations; $D$ denotes a post-hoc discriminator trained to distinguish real and synthetic sequences; $K$ is the number of considered lags.
		\end{tablenotes}
		
	\end{threeparttable}
\end{table*}

\subsection{Experiments Setting}
\subsubsection{Dataset}
To evaluate the effectiveness of the proposed generative framework on both benchmark and real-world time-series data, we conduct experiments on four representative datasets: Lorenz, Electricity Transformer Temperature (ETTh), Influenza-Like Illness (ILI), and Agricultural N$_2$O Emission (also known as Licor). These datasets cover a wide range of temporal characteristics, including chaotic dynamics, long-term trends, seasonal patterns, and environmentally driven stochastic processes. All datasets are segmented into overlapping subsequences using a fixed context length and prediction horizon. Specifically, the conditioning window length is set to p=16 and the generation length is set to q=32. During training, the generator learns to produce a q-step sequence conditioned on the preceding p observations.

\subsubsection{Benchmarks}
To comprehensively evaluate the effectiveness of the proposed MCMC-enhanced generation framework, we conduct comparative experiments using five representative GANs for time series generation: RCGAN \cite{hyland2017real}, RCWGAN \cite{wiese2020quant}, SigCWGAN \cite{liao2020conditional}, TimeGAN \cite{yoon2019time}, and AEC-GAN \cite{wang2023aec}. To further reduce distribution shift, we integrate an MCMC-based correction module into each benchmark model. After the generator produces synthetic sequences, the MCMC module refines the generated samples by sampling from a corrected distribution conditioned on the original data statistics. This procedure allows the generated time series to better align with the target distribution while preserving temporal dynamics learned by the generative models.

\subsubsection{Measurement Metrics}
The evaluation metrics adopted in this study are summarized in Table~\ref{tab:metrics_formula_summary}. These metrics jointly assess the quality of generated time series from complementary perspectives, including temporal dependence (ACF), distributional characteristics (skewness error and kurtosis error), global similarity to real data ($R^2$), as well as realism (discriminative score) and functional utility  (predictive score). Together, they provide a comprehensive and quantitative evaluation of the extent to which the synthetic data preserves the statistical structure, temporal dynamics, and predictive behavior of the original time series.

\subsection{Autocorrelation Analysis}
\begin{figure}[htbp]	
	\centering
	\subfigure
	{\label{fig:aecgan_etth1_acf}}
	\addtocounter{subfigure}{-1}
	\subfigure[Autocorrelation of one dimension in the ETTh Dataset (AEC-GAN)]{
		\includegraphics[width=0.48\textwidth]{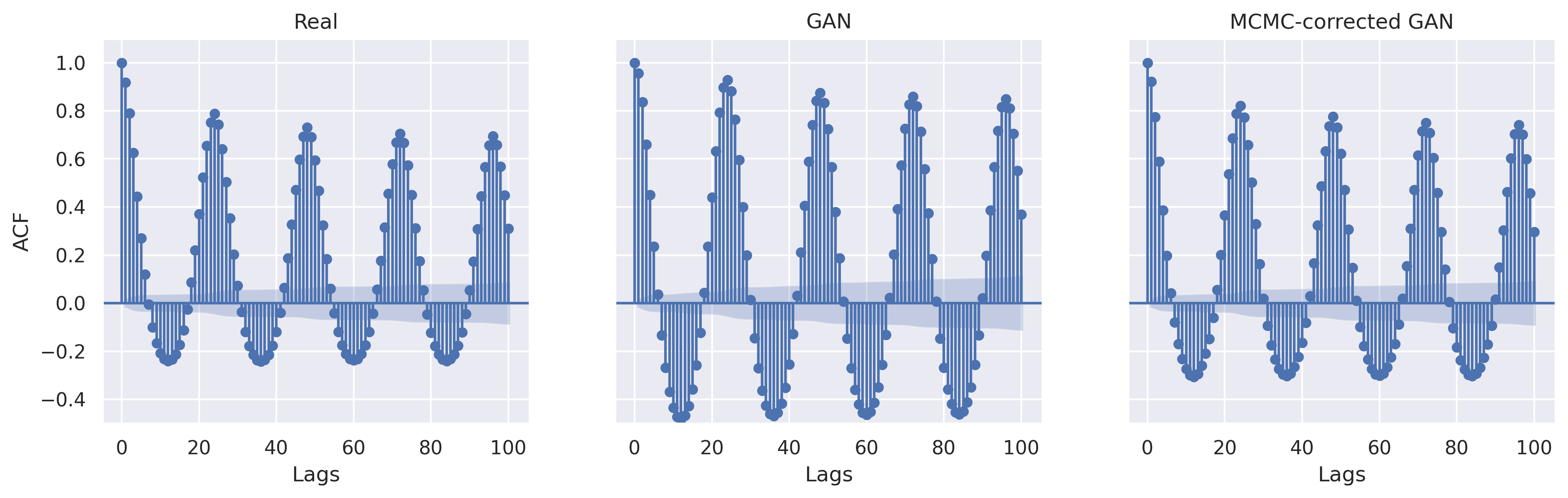}
	}
	\subfigure
	{\label{fig:rcgan_licor_acf}}
	\addtocounter{subfigure}{-1}
	\subfigure[Autocorrelation of one dimension in the Licor Dataset (RC-GAN)]{
		\includegraphics[width=0.48\textwidth]{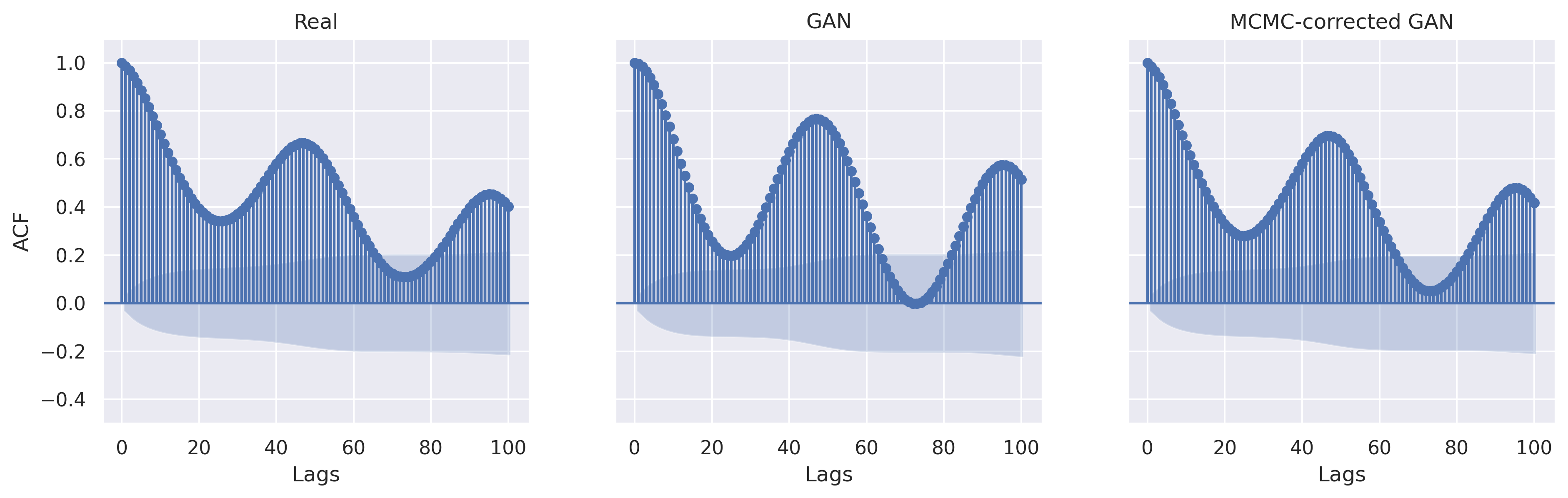}
	}
	\caption{ACF comparison illustrating temporal dependence preservation. The MCMC-refined models exhibit autocorrelation patterns more closely aligned with those of the real data.}
	\label{fig:acf}
\end{figure}

\begin{figure*}[htbp]
	\centering
	\subfigure
	{\label{fig:aecgan_licor_tsne}}
	\addtocounter{subfigure}{-1}
	\subfigure[t-SNE of Licor Dataset Generated by AECGAN]{
		\includegraphics[width=0.21\textwidth]{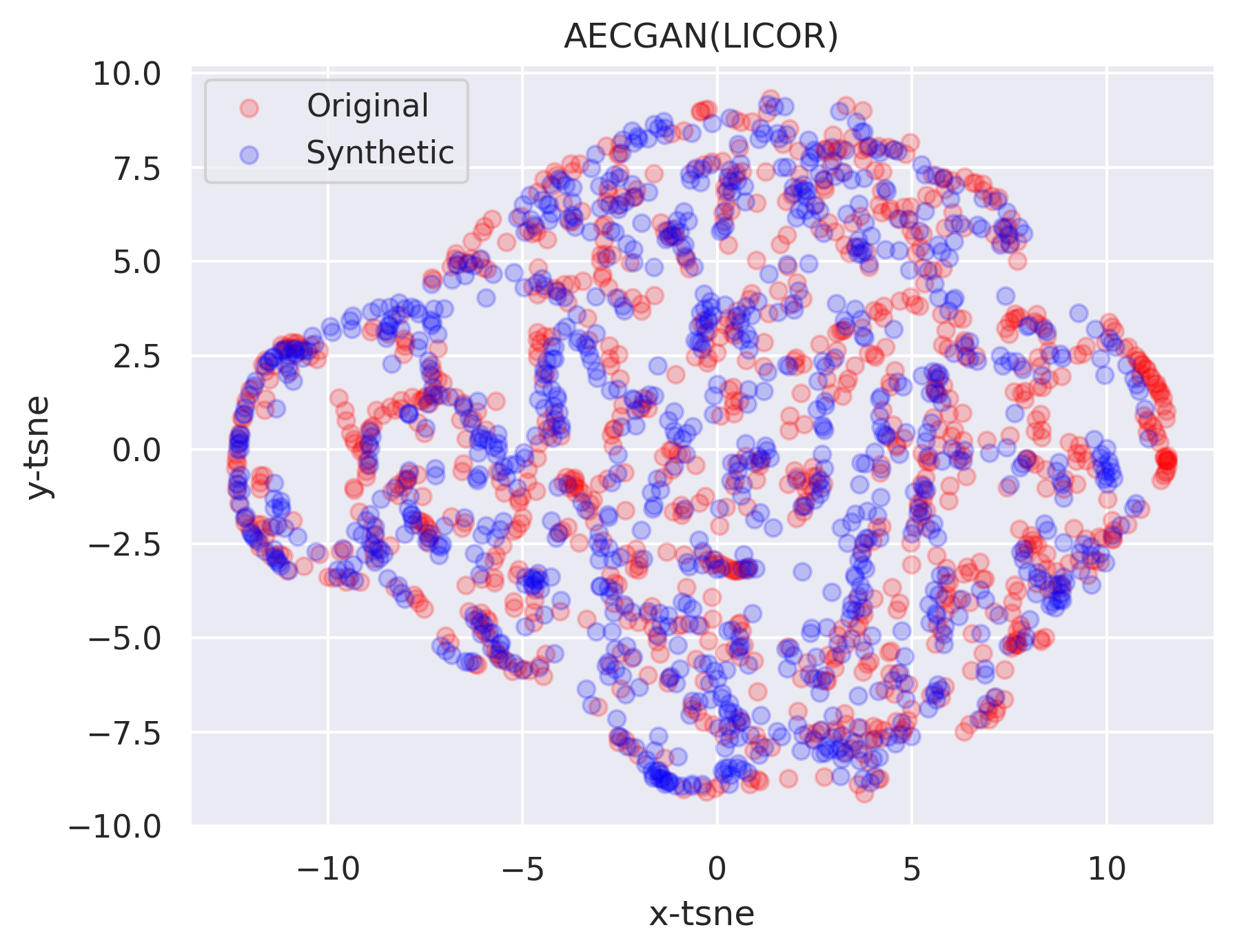}
	}
	\subfigure
	{\label{fig:aecgan_mcmc_licor_tsne}}
	\addtocounter{subfigure}{-1}
	\subfigure[t-SNE of Licor Dataset Generated by AECGAN-MCMC]{
		\includegraphics[width=0.21\textwidth]{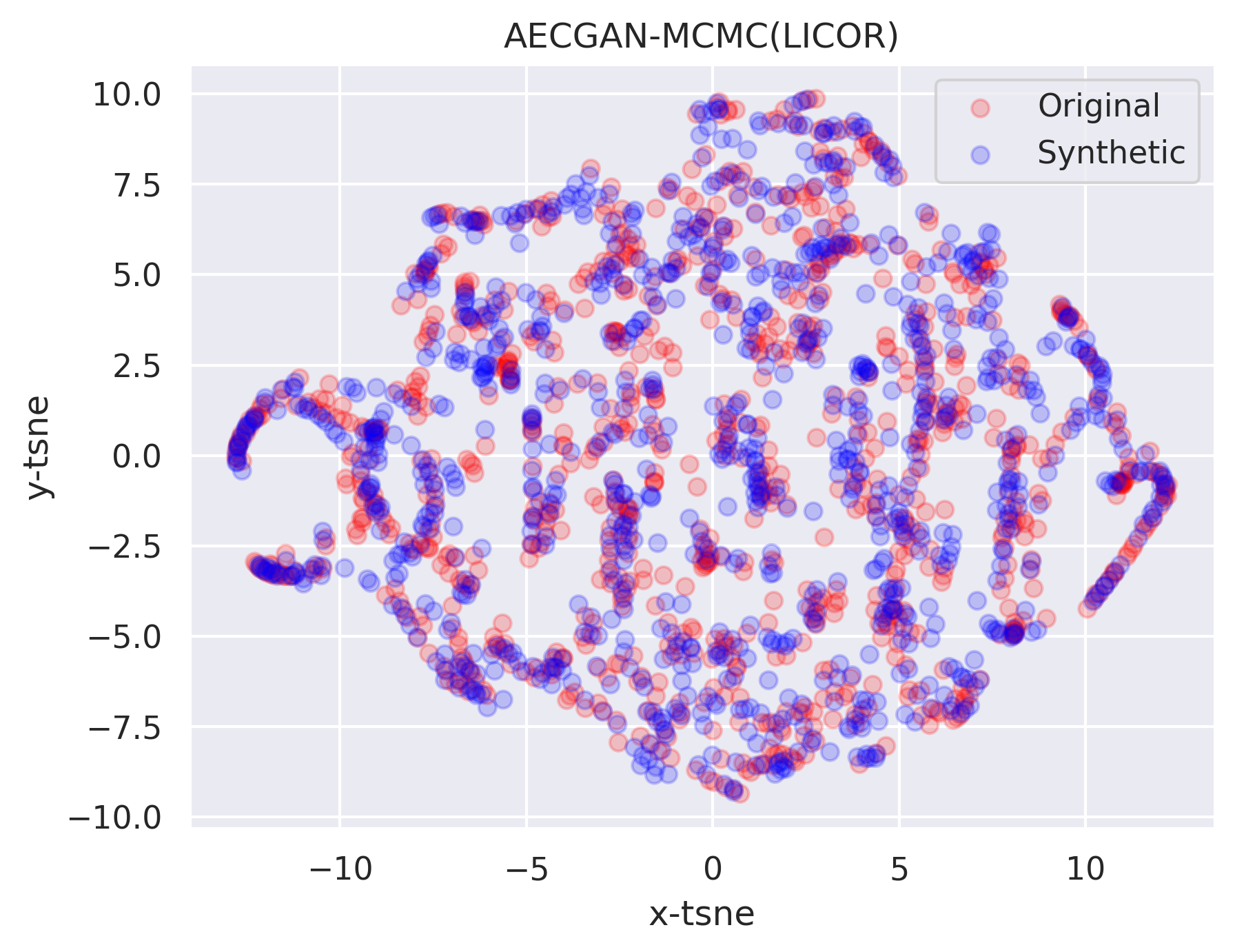}
	}
	\centering
	\subfigure
	{\label{fig:timegan_lorenz_tsne}}
	\addtocounter{subfigure}{-1}
	\subfigure[t-SNE of Lorenz Dataset Generated by TimeGAN]{
		\includegraphics[width=0.21\textwidth]{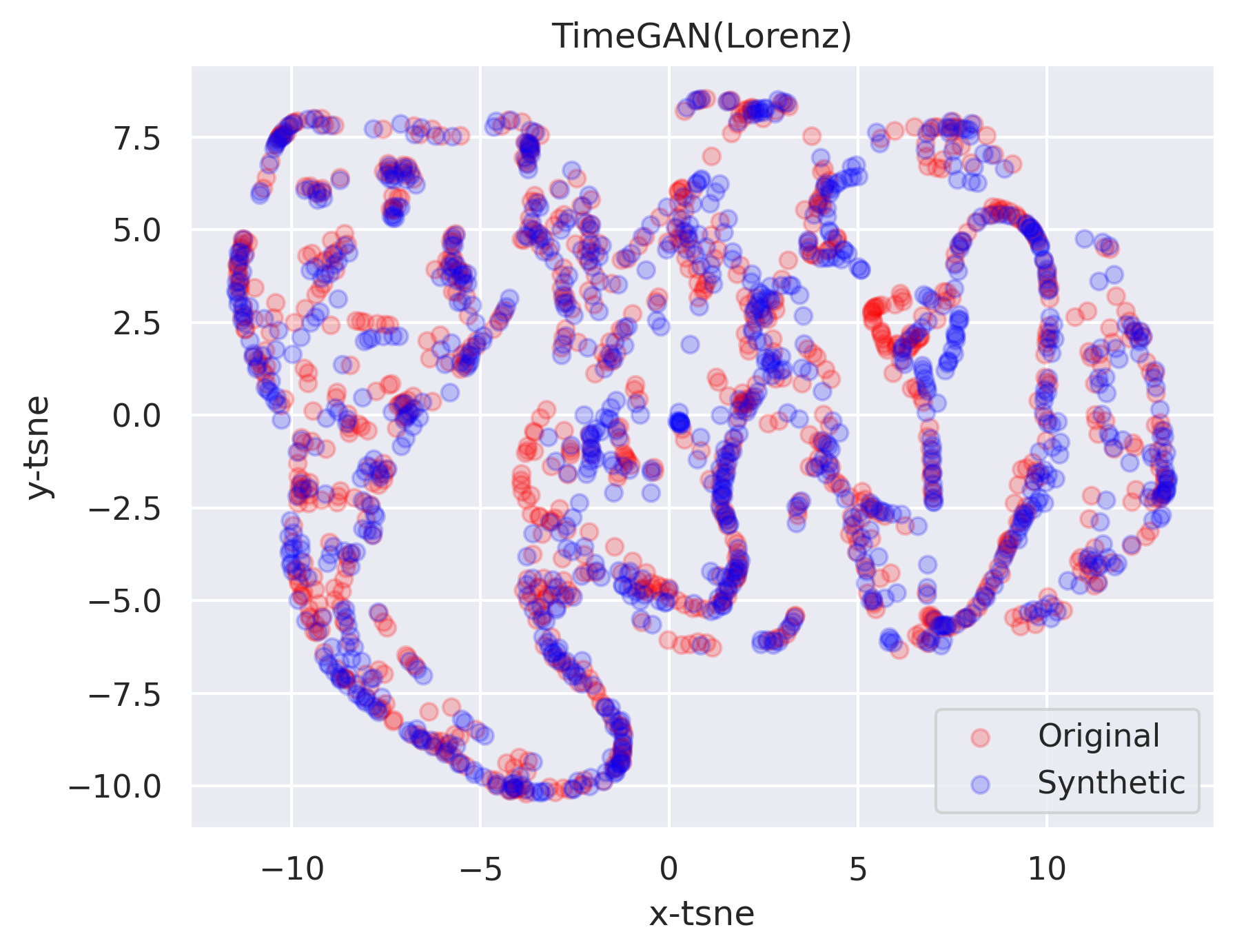}
	}
	\subfigure
	{\label{fig:timegan_mcmc_lorenz_tsne}}
	\addtocounter{subfigure}{-1}
	\subfigure[t-SNE of Lorenz Dataset Generated by TimeGAN-MCMC]{
		\includegraphics[width=0.21\textwidth]{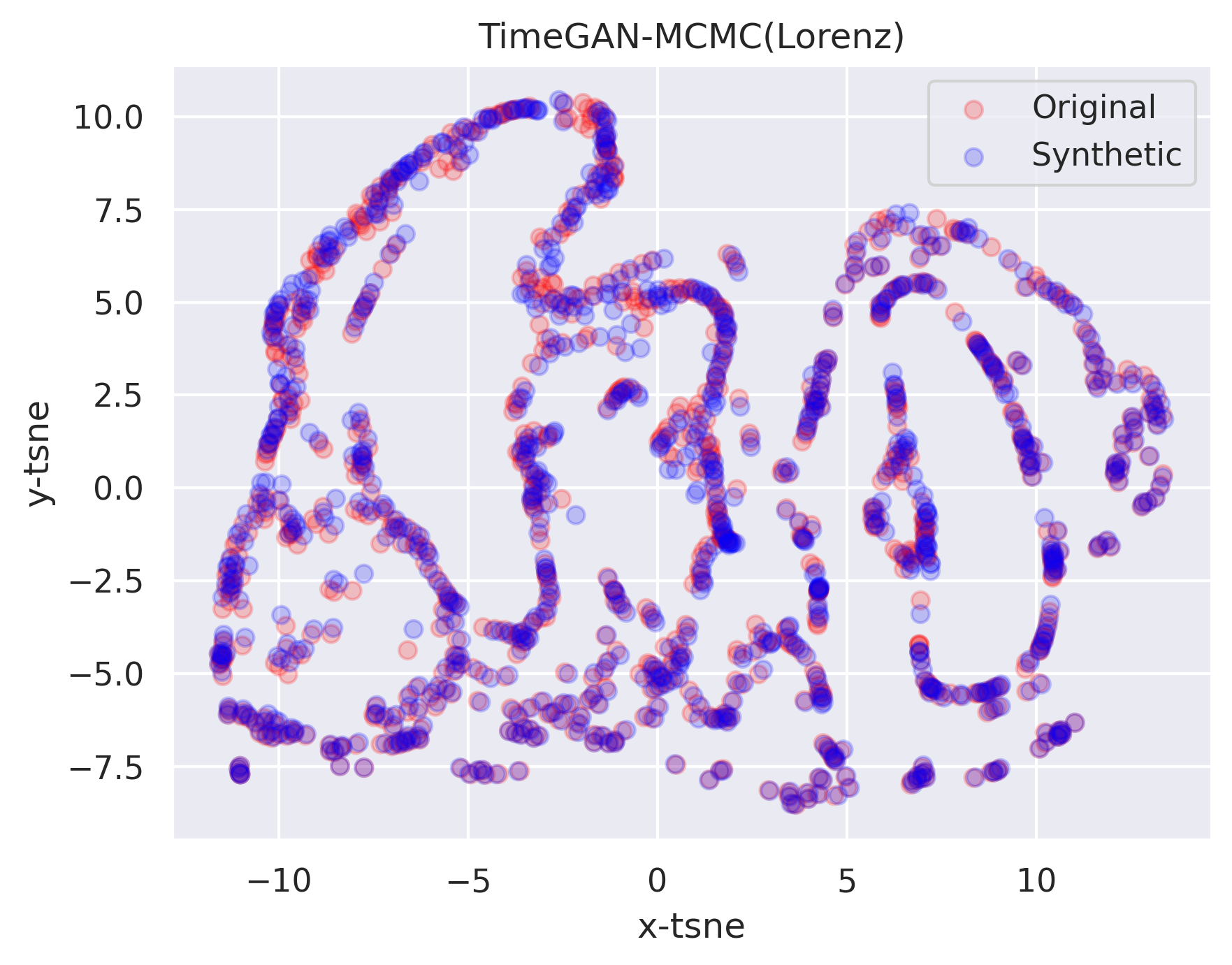}
	}
	\caption{t-SNE of Datasets Generated by GAN and Corresponding GAN-MCMC}
	\label{fig:all_tsne}
\end{figure*}

\begin{figure*}[htbp]
	\centering
	\subfigure
	{\label{fig:rcwgan_etth_pca}}
	\addtocounter{subfigure}{-1}
	\subfigure[PCA of ETTh Dataset Generated by RCWGAN]{
		\includegraphics[width=0.21\textwidth]{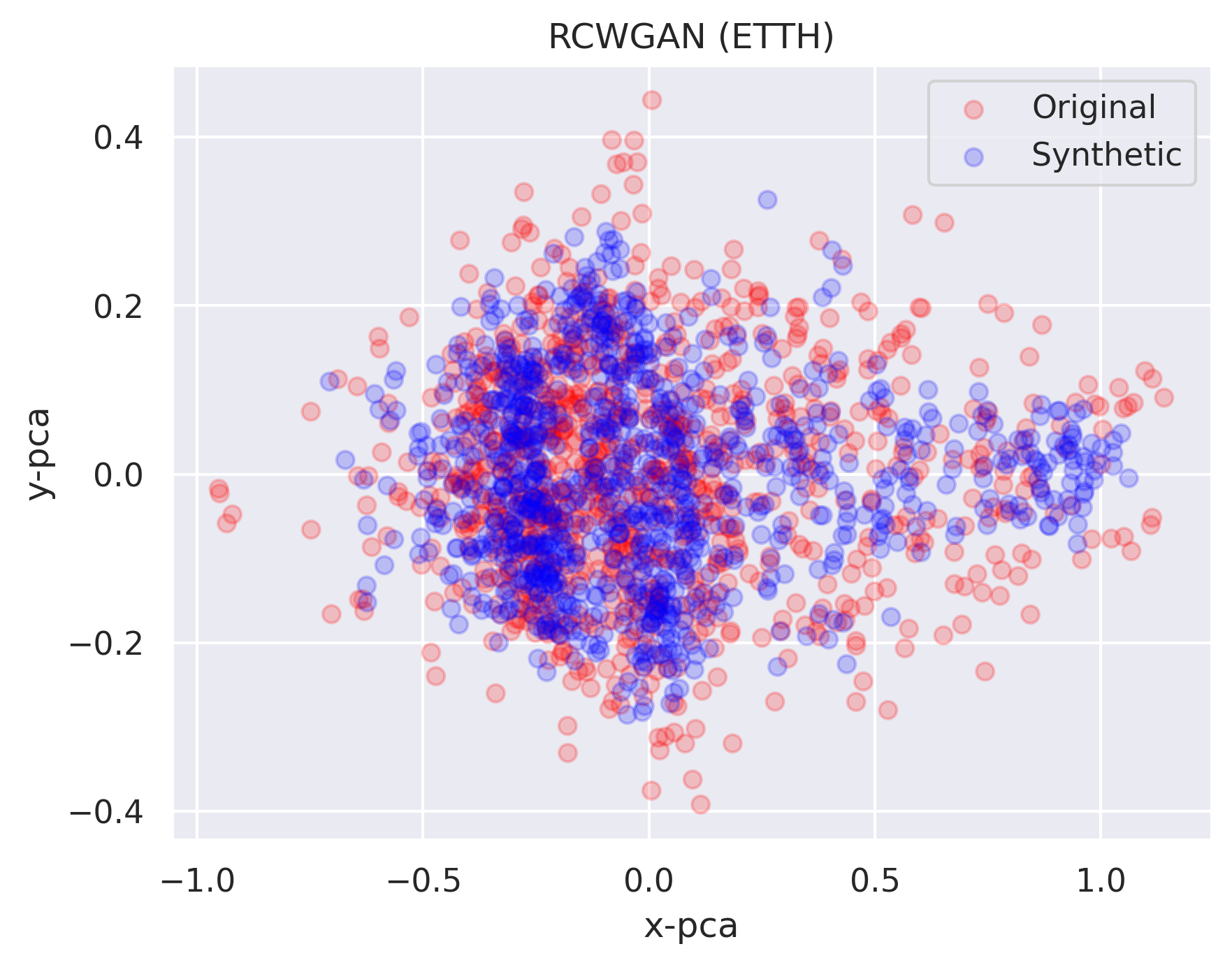}
	}
	\subfigure
	{\label{fig:rcwgan_mcmc_etth_pca}}
	\addtocounter{subfigure}{-1}
	\subfigure[PCA of ETTh Dataset Generated by RCWGAN-MCMC]{
		\includegraphics[width=0.21\textwidth]{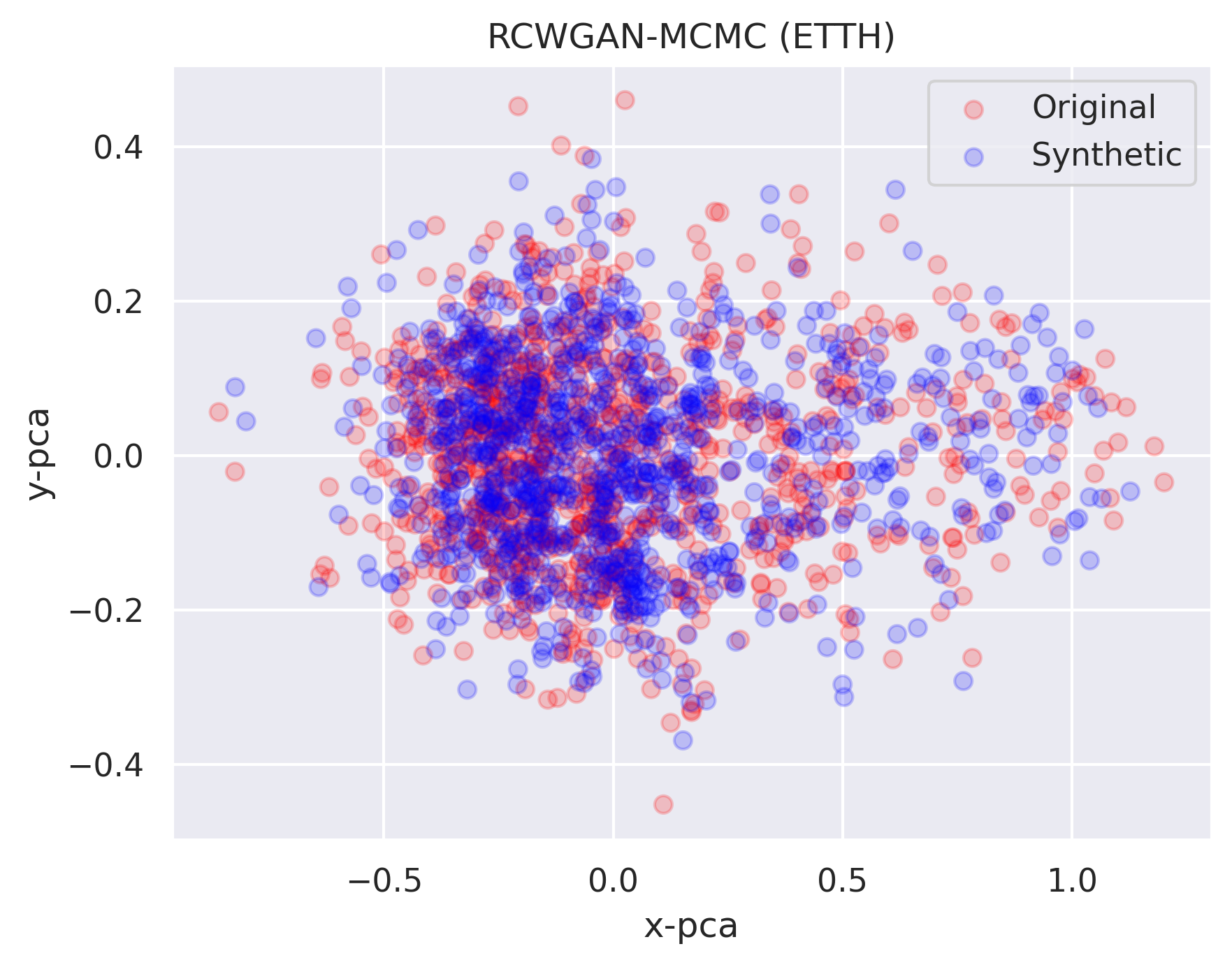}
	}
	\centering
	\subfigure
	{\label{fig:sigcwgan_ili_pca}}
	\addtocounter{subfigure}{-1}
	\subfigure[PCA of ILI Dataset Generated by SigCWGAN]{
		\includegraphics[width=0.21\textwidth]{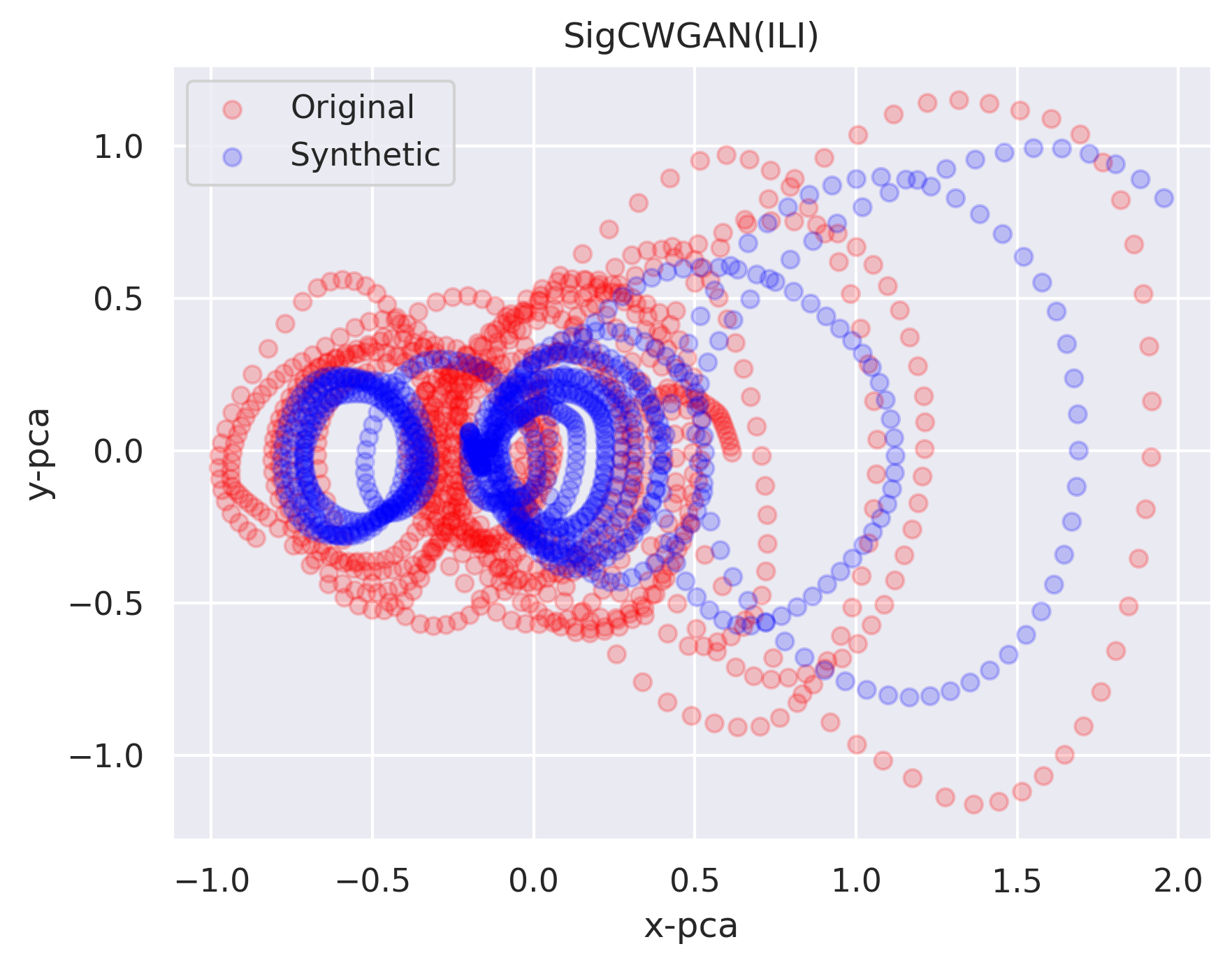}
	}
	\subfigure
	{\label{fig:siggcwgan_mcmc_ili_pca}}
	\addtocounter{subfigure}{-1}
	\subfigure[PCA of ILI Dataset Generated by SigCWGAN-MCMC]{
		\includegraphics[width=0.21\textwidth]{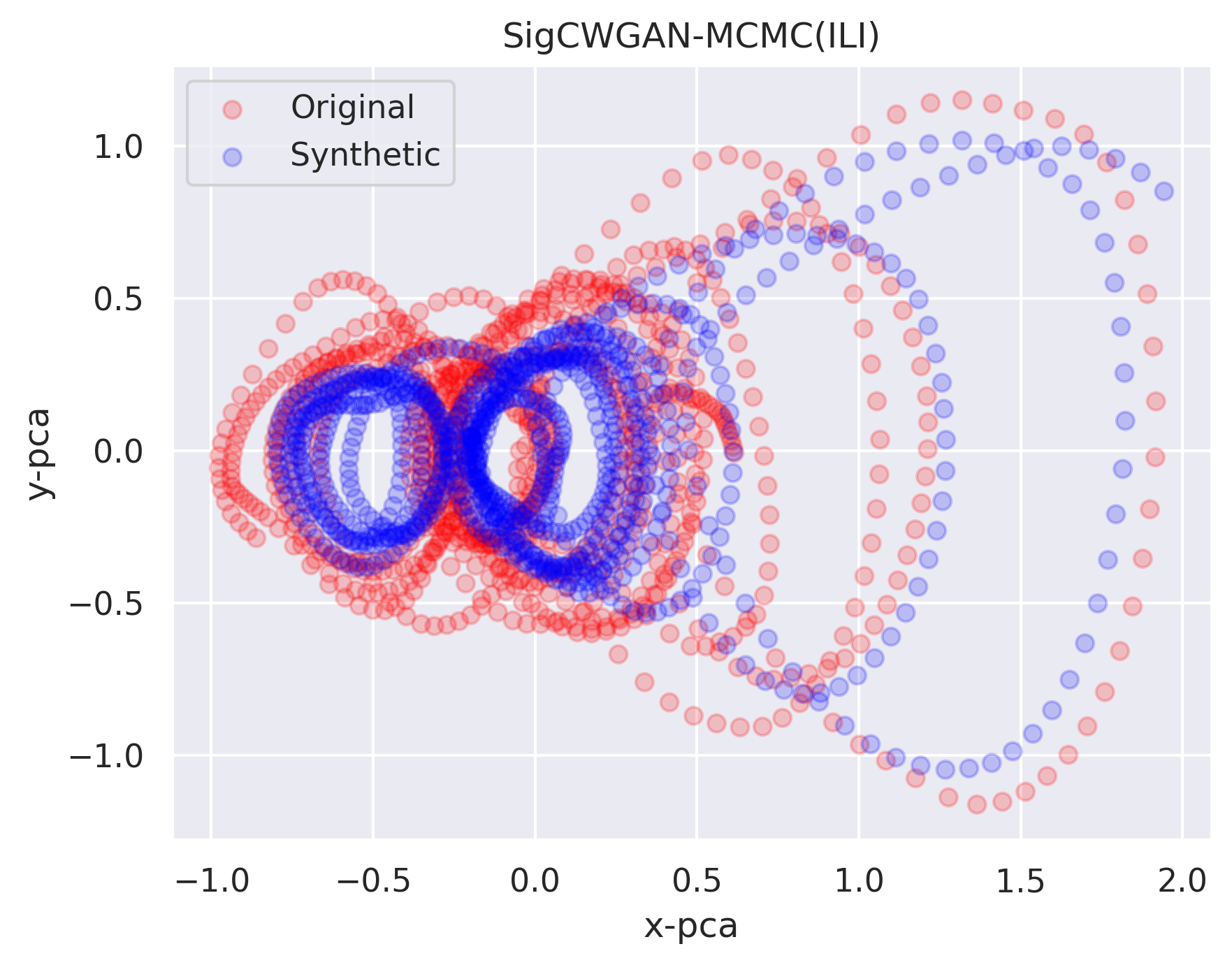}
	}
	\caption{PCA of Datasets Generated by GAN and Corresponding GAN-MCMC}
	\label{fig:all_pca}
\end{figure*}
To provide an intuitive evaluation of temporal dependence preservation, we compare the ACF of the original data, the sequences generated by the baseline GAN, and those generated by the proposed GAN-MCMC framework.

The autocorrelation function measures the linear dependence between observations separated by different time lags, and therefore reflects the intrinsic temporal memory structure of a time series. Preserving ACF patterns is crucial in regression-oriented time-series modeling, as it directly relates to the consistency of transition dynamics.

Figure \ref{fig:acf} illustrates the ACF curves computed for the ETTh and Licor dataset and the corresponding synthetic datasets generated by AEC-GAN-based and RC-GAN-based models. It can be observed that the ACF of sequences generated by the baseline GAN deviates noticeably from that of the real data. In contrast, the GAN-MCMC generated sequences exhibit autocorrelation patterns that closely align with those of the original time series across a wide range of lags.

This qualitative result, as shown in Table \ref{table:model_comparision_on_Lorenz}, \ref{table:model_comparision_on_Licor}, \ref{table:model_comparision_on_etth}, \ref{table:model_comparision_on_ili}, suggests that the proposed MCMC correction module effectively corrects temporal inconsistencies introduced during autoregressive generation, leading to improved preservation of the underlying dependence structure.

\subsection{Distributional Visualization via t-SNE and PCA}

In addition to ACF analysis, we further evaluate the distributional alignment between real and synthetic time series using t-distributed Stochastic Neighbor Embedding (t-SNE) and Principal Component Analysis (PCA). These visualization techniques provide complementary perspectives on the global structure of generated data in a lower-dimensional representation space.

\paragraph{t-SNE Visualization.}
We first apply t-SNE to compare the real data and synthetic samples generated by AEC-GAN and TimeGAN, along with their corresponding MCMC-refined variants. Specifically, the Licor dataset is used for evaluating AEC-GAN and AEC-GAN-MCMC, while the Lorenz dataset is employed for TimeGAN and TimeGAN-MCMC.

As shown in Figure \ref{fig:all_tsne}, , the synthetic data generated by the baseline GAN models exhibit noticeable deviations from the real data distribution in the embedded space. In contrast, the GAN-MCMC refined samples display significantly improved overlap with the real data clusters. This improvement suggests that the correction mechanism reduces divergence between empirical and generated distributions.

\paragraph{PCA Visualization.}
To further assess global structural consistency, we perform PCA analysis on synthetic sequences generated by RC-WGAN and SigCWGAN and their MCMC-refined counterparts. The ETTh dataset is used for evaluating RC-WGAN and RC-WGAN-MCMC, while the ILI dataset is adopted for SigCWGAN and SigCWGAN-MCMC.

Figure \ref{fig:all_pca}, illustrates the first two principal components of real and generated sequences. Compared with the baseline models, the MCMC-refined versions exhibit distributions that more closely align with the principal directions of the original data. The closer alignment in principal component space indicates that the MCMC module mitigates distortion in dominant variance directions.

Overall, the t-SNE and PCA visualizations consistently demonstrate that the proposed GAN-MCMC framework improves global distributional fidelity across multiple architectures and datasets.

\subsection{Quantitative Performance Comparison}

\begin{table*}[htbp]
	\centering
	\begin{threeparttable}
		\caption{Performance Comparison among Different Models on Generating Lorenz Data}
		\begin{tabular}
			{m{3cm}m{1.4cm}m{1.4cm}m{1.4cm}m{1.4cm}m{1.5cm}m{1.4cm}}
			\toprule & ACF & Skew & Kurt & R$^2$ & Discriminative & Predictive \\
			
			\midrule RCGAN  & 0.028$\pm$0.005  & 0.026$\pm$0.005  & 0.033$\pm$0.022  & 0.994$\pm$0.002  & 0.007$\pm$0.003  & 0.069$\pm$0.007 \\
			\midrule RCGAN-MCMC  & \textbf{0.017$\pm$0.009}  & \textbf{0.008$\pm$0.004}  & \textbf{0.011$\pm$0.007}  & \textbf{0.999$\pm$0.000}  & \textbf{0.002$\pm$0.001}  & \textbf{0.062$\pm$0.001} \\
			
			\midrule RCWGAN & 0.050$\pm$0.001 & 0.008$\pm$0.001 & 0.027$\pm$0.002 & 0.995$\pm$0.000 & 0.014$\pm$0.000 & 0.066$\pm$0.002 \\
			\midrule RCWGAN-MCMC & \textbf{0.015$\pm$0.002}  & \textbf{0.008$\pm$0.001} & \textbf{0.009$\pm$0.003} & \textbf{0.998$\pm$0.000} & \textbf{0.001$\pm$0.001} & \textbf{0.063$\pm$0.002} \\
			
			\midrule TimeGAN  & 0.031$\pm$0.015  & 0.015$\pm$0.007  & 0.040$\pm$0.009  & 0.994$\pm$0.003  & 0.013$\pm$0.008  & 0.067$\pm$0.006 \\
			\midrule TimeGAN-MCMC  & \textbf{0.015$\pm$0.002}  & \textbf{0.007$\pm$0.004}  & \textbf{0.016$\pm$0.004}  & \textbf{0.998$\pm$0.000}  & \textbf{0.004$\pm$0.002}  & \textbf{0.062$\pm$0.002} \\
			
			\midrule SigCWGAN  & 0.107$\pm$0.001  & 0.045$\pm$0.001  & 0.063$\pm$0.001  & 0.996$\pm$0.001  & 0.003$\pm$0.001  & 0.066$\pm$0.002 \\
			\midrule SigCWGAN-MCMC & \textbf{0.033$\pm$0.002}  & \textbf{0.017$\pm$0.001}  & \textbf{0.018$\pm$0.001}  & \textbf{0.998$\pm$0.001}  & \textbf{0.001$\pm$0.001}  & \textbf{0.064$\pm$0.001} \\
			
			\midrule AECGAN & 0.081$\pm$0.021 & 0.035$\pm$0.006 & 0.045$\pm$0.009 & 0.989$\pm$0.004 & 0.014$\pm$0.007 & 0.070$\pm$0.007 \\
			\midrule AECGAN-MCMC & \textbf{0.038$\pm$0.013} & \textbf{0.016$\pm$0.008} & \textbf{0.029$\pm$0.014} & \textbf{0.996$\pm$0.003} & \textbf{0.004$\pm$0.003} & \textbf{0.064$\pm$0.003} \\
			
			\bottomrule
		\end{tabular}
		\label{table:model_comparision_on_Lorenz}
	\end{threeparttable}
\end{table*}

\begin{table*}[htbp]
	\centering
	\begin{threeparttable}
		\caption{Performance Comparison among Different Models on Generating Licor Data}
		\begin{tabular}
			{m{3cm}m{1.4cm}m{1.4cm}m{1.4cm}m{1.4cm}m{1.5cm}m{1.4cm}}
			\toprule & ACF & Skew & Kurt & R$^2$ & Discriminative & Predictive \\
			
			\midrule RCGAN  & 0.353$\pm$0.098  & 0.118$\pm$0.025  & 0.300$\pm$0.081  & 0.798$\pm$0.033  & 0.045$\pm$0.034  & 0.086$\pm$0.001 \\
			\midrule RCGAN-MCMC  & \textbf{0.147$\pm$0.039}  & \textbf{0.063$\pm$0.019}  & \textbf{0.203$\pm$0.083}  & \textbf{0.930$\pm$0.006}  & \textbf{0.008$\pm$0.003}  & \textbf{0.085$\pm$0.000} \\
			
			\midrule RCWGAN & 0.427$\pm$0.009 & 0.119$\pm$0.002 & 0.238$\pm$0.004 & 0.762$\pm$0.001 & 0.092$\pm$0.001 & 0.084$\pm$0.000 \\
			\midrule RCWGAN-MCMC & \textbf{0.085$\pm$0.005} & \textbf{0.050$\pm$0.003} & \textbf{0.102$\pm$0.014} & \textbf{0.926$\pm$0.001} & \textbf{0.013$\pm$0.002} & \textbf{0.084$\pm$0.000} \\
			
			\midrule TimeGAN  & 0.343$\pm$0.099  & 0.132$\pm$0.030  & 0.311$\pm$0.080  & 0.785$\pm$0.070  & 0.055$\pm$0.058  & 0.085$\pm$0.001 \\
			\midrule TimeGAN-MCMC  & \textbf{0.121$\pm$0.012}  & \textbf{0.053$\pm$0.015}  & \textbf{0.173$\pm$0.082}  & \textbf{0.933$\pm$0.007}  & \textbf{0.008$\pm$0.005}  & \textbf{0.084$\pm$0.001} \\
			
			\midrule SigCWGAN & 0.843$\pm$0.004 & 0.461$\pm$0.002 & 0.692$\pm$0.004 & 0.689$\pm$0.001 & 0.038$\pm$0.003 & 0.087$\pm$0.000 \\
			\midrule SigCWGAN-MCMC & \textbf{0.440$\pm$0.011} & \textbf{0.341$\pm$0.007} & \textbf{0.507$\pm$0.011} & \textbf{0.794$\pm$0.005} & \textbf{0.002$\pm$0.001} & \textbf{0.084}$\pm$\textbf{0.000} \\
			
			\midrule AECGAN & 0.443$\pm$0.134 & 0.141$\pm$0.028 & 0.301$\pm$0.077 & 0.790$\pm$0.020 & 0.058$\pm$0.017 & 0.085$\pm$0.001 \\
			\midrule AECGAN-MCMC & \textbf{0.261$\pm$0.063} & \textbf{0.081$\pm$0.011} & \textbf{0.237$\pm$0.045} & \textbf{0.906$\pm$0.003} & \textbf{0.014$\pm$0.007} & \textbf{0.084$\pm$0.000} \\
			
			\bottomrule
		\end{tabular}
		\label{table:model_comparision_on_Licor}
	\end{threeparttable}
\end{table*}

\begin{table*}[htbp]
	\centering
	\begin{threeparttable}
		\caption{Performance Comparison among Different Models on Generating ETTH Data}
		\begin{tabular}
			{m{3cm}m{1.4cm}m{1.4cm}m{1.4cm}m{1.4cm}m{1.5cm}m{1.4cm}}
			\toprule & ACF & Skew & Kurt & R$^2$ & Discriminative & Predictive \\
			
			\midrule RCGAN  & 0.986$\pm$0.077  & 0.498$\pm$0.137  & 3.221$\pm$0.913  & 0.616$\pm$0.037  & 0.167$\pm$0.037  & 0.118$\pm$0.003 \\
			\midrule RCGAN-MCMC  & \textbf{0.299$\pm$0.041}  & \textbf{0.088$\pm$0.023}  & \textbf{0.368$\pm$0.232}  & \textbf{0.743$\pm$0.014}  & \textbf{0.069$\pm$0.050}  & \textbf{0.110$\pm$0.002} \\
			
			\midrule RCWGAN & 1.095$\pm$0.002 & 0.334$\pm$0.005 & 1.692$\pm$0.035 & 0.602$\pm$0.002 & 0.227$\pm$0.001 & 0.115$\pm$0.002 \\
			\midrule RCWGAN-MCMC & \textbf{0.231$\pm$0.004} & \textbf{0.109$\pm$0.006} & \textbf{0.152$\pm$0.030} & \textbf{0.682$\pm$0.001} & \textbf{0.075$\pm$0.003} & \textbf{0.108$\pm$0.001} \\
			
			\midrule TimeGAN  & 0.916$\pm$0.117  & 0.390$\pm$0.121  & 2.208$\pm$0.647  & 0.630$\pm$0.032  & 0.173$\pm$0.061  & 0.122$\pm$0.008 \\
			\midrule TimeGAN-MCMC  & \textbf{0.233$\pm$0.054}  & \textbf{0.102$\pm$0.014}  & \textbf{0.353$\pm$0.101}  & \textbf{0.758$\pm$0.016}  & \textbf{0.055$\pm$0.044}  & \textbf{0.110$\pm$0.002} \\
			
			\midrule SigCWGAN & 2.086$\pm$0.005 & 1.187$\pm$0.003 & 4.100$\pm$0.027 & 0.521$\pm$0.001 & 0.113$\pm$0.003 & \textbf{0.120$\pm$0.003} \\
			\midrule SigCWGAN-MCMC & \textbf{0.834$\pm$0.008} & \textbf{0.786$\pm$0.004} & \textbf{3.094$\pm$0.029} & \textbf{0.618$\pm$0.012} & \textbf{0.018$\pm$0.002} & 0.122$\pm$0.004 \\
			
			\midrule AECGAN & 0.916$\pm$0.138 & 0.242$\pm$0.074 & 1.088$\pm$0.364 & 0.577$\pm$0.051 & 0.191$\pm$0.091 & 0.126$\pm$0.009 \\
			\midrule AECGAN-MCMC & \textbf{0.400$\pm$0.088} & \textbf{0.134$\pm$0.026} & \textbf{0.465$\pm$0.202} & \textbf{0.789$\pm$0.014} & \textbf{0.080$\pm$0.052} & \textbf{0.109$\pm$0.003} \\
			\bottomrule
		\end{tabular}
		\label{table:model_comparision_on_etth}
	\end{threeparttable}
\end{table*}

\begin{table*}[htbp]
	\centering
	\begin{threeparttable}
		\caption{Performance Comparison among Different Models on Generating ILI Data}
		\begin{tabular}
			{m{3cm}m{1.4cm}m{1.4cm}m{1.4cm}m{1.4cm}m{1.5cm}m{1.4cm}}
			\toprule & ACF & Skew & Kurt & R$^2$ & Discriminative & Predictive \\
			
			\midrule RCGAN  & 0.756$\pm$0.193  & 0.415$\pm$0.135  & 3.105$\pm$2.013  & 0.694$\pm$0.059  & 0.070$\pm$0.039  & 0.033$\pm$0.003 \\
			\midrule RCGAN-MCMC  & \textbf{0.576$\pm$0.142}  & \textbf{0.240$\pm$0.111}  & \textbf{1.562$\pm$0.765}  & \textbf{0.800$\pm$0.046}  & \textbf{0.026$\pm$0.014}  & \textbf{0.032$\pm$0.001} \\
			
			\midrule RCWGAN & 0.531$\pm$0.023  & 0.226$\pm$0.011  & \textbf{0.989$\pm$0.089}  & 0.797$\pm$0.005  & 0.052$\pm$0.002  & 0.0310$\pm$0.001 \\
			\midrule RCWGAN-MCMC & \textbf{0.183$\pm$0.017} & \textbf{0.207$\pm$0.042} & 1.286$\pm$0.362 & \textbf{0.892$\pm$0.002} & \textbf{0.008$\pm$0.004} & \textbf{0.030$\pm$0.001} \\
			
			\midrule TimeGAN  & 0.820$\pm$0.054  & 0.420$\pm$0.200  & 2.388$\pm$1.801  & 0.622$\pm$0.214  & 0.052$\pm$0.019  & 0.036$\pm$0.002 \\
			\midrule TimeGAN-MCMC  & \textbf{0.634$\pm$0.118}  & \textbf{0.380$\pm$0.237}  & \textbf{2.251$\pm$2.203}  & \textbf{0.718$\pm$0.217}  & \textbf{0.024$\pm$0.019}  & \textbf{0.034$\pm$0.001} \\
			
			\midrule SigCWGAN  & 1.322$\pm$0.007  & 0.258$\pm$0.004  & 1.028$\pm$0.026  & 0.471$\pm$0.010  & 0.143$\pm$0.002  & 0.034$\pm$0.000 \\
			\midrule SigCWGAN-MCMC & \textbf{1.006$\pm$0.005}  & \textbf{0.182$\pm$0.003}  & \textbf{0.984$\pm$0.012}  & \textbf{0.556$\pm$0.002}  & \textbf{0.084$\pm$0.007}  & \textbf{0.033$\pm$0.000} \\
			
			\midrule AECGAN & 0.936$\pm$0.088 & 0.601$\pm$0.160 & 3.431$\pm$1.097 & 0.617$\pm$0.040 & 0.035$\pm$0.018 & 0.036$\pm$0.005 \\
			\midrule AECGAN-MCMC & \textbf{0.740$\pm$0.098} & \textbf{0.405$\pm$0.093} & \textbf{2.456$\pm$0.529} & \textbf{0.750$\pm$0.047} & \textbf{0.020$\pm$0.022} & \textbf{0.032$\pm$0.002} \\
			
			\bottomrule
		\end{tabular}
		\label{table:model_comparision_on_ili}
	\end{threeparttable}
\end{table*}

Tables \ref{table:model_comparision_on_Lorenz}, \ref{table:model_comparision_on_Licor}, \ref{table:model_comparision_on_etth}, \ref{table:model_comparision_on_ili}  present a comprehensive quantitative comparison between baseline GAN models and the corresponding GAN-MCMC framework across four representative datasets: Lorenz, Licor, ETTh, and ILI. The evaluation metrics jointly assess temporal dependence (ACF), higher-order distributional characteristics (skewness error and kurtosis error), global similarity ($R^2$), realism (discriminative score), and predictive utility (predictive score). The better-performing results are shown in bold.

As shown in Table \ref{table:model_comparision_on_Lorenz}, on the Lorenz dataset, the proposed MCMC correction consistently improves temporal and distributional metrics across all GAN architectures. In particular, ACF error is substantially reduced for RCGAN, RCWGAN, TimeGAN, SigCWGAN, and AECGAN after MCMC correction. Skewness and kurtosis errors are also significantly lowered, indicating improved preservation of higher-order statistics. Moreover, $R^2$ values increase toward near-perfect alignment, while discriminative scores approach zero, suggesting that corrected samples are more difficult to distinguish from real data. These results demonstrate that the MCMC module effectively enhances both temporal fidelity and statistical realism in chaotic dynamical systems.

As shown in Table \ref{table:model_comparision_on_Licor}, the Licor dataset, which exhibits more complex real-world environmental dynamics, shows even more pronounced improvements. The baseline GAN models suffer from large ACF error, skewness error, and kurtosis error. After MCMC correction, ACF errors are dramatically reduced (e.g., RCWGAN and TimeGAN), and $R^2$ scores increase significantly. Discriminative scores consistently decrease across models, indicating improved temporal dynamics in synthetic time series dataset. Notably, predictive scores remain stable or slightly improved, suggesting that temporal dynamics are preserved without sacrificing forecasting utility. These results confirm that the proposed correction mechanism effectively mitigates distributional deviation in practical regression-oriented datasets.

As shown in Table \ref{table:model_comparision_on_etth}, on the ETTh dataset, which contains long-term periodic and trend components, the GAN-MCMC models show substantial reductions in ACF, skewness, and kurtosis errors across all GANs. In particular, the large ACF discrepancies observed in baseline models (e.g., RCWGAN and SigCWGAN) are significantly reduced after correction. Although improvements in $R^2$ are moderate, the consistent decrease in discriminative scores indicates improved alignment between generated and real distributions. These findings suggest that the correction process effectively corrects temporal inconsistencies in long-horizon forecasting scenarios.

As shown in Table \ref{table:model_comparision_on_ili}, for the ILI dataset, which features seasonal epidemiological patterns, MCMC correction again leads to systematic improvements in ACF and higher-order moment errors, including skewness error and kurtosis error. While gains in $R^2$ and predictive score are dataset-dependent, the consistent reduction in discriminative scores across models demonstrates improved statistical realism. The correction process thus enhances both local temporal dynamics and global distributional consistency in seasonal time-series data.

Across all datasets and GAN architectures, the GAN-MCMC models consistently reduce ACF, skewness, and kurtosis errors while improving discriminative realism without degrading predictive performance. The reduction in ACF error indicates improved preservation of second-order temporal dependencies, suggesting that the correction mitigates temporal drift introduced during autoregressive generation. Improvements in skewness and kurtosis further demonstrate enhanced alignment of higher-order distributional characteristics, indicating that the MCMC module stabilizes global distributional structure beyond local transitions. The consistent decrease in discriminative scores shows that refined samples are more indistinguishable from real data, and the robustness of these improvements across architectures suggests that the correction mechanism is model-agnostic. Importantly, predictive performance remains stable or improves slightly, indicating that enhanced realism does not compromise forecasting utility. Overall, these results demonstrate that distribution shift in autoregressive generative models can be effectively mitigated through post-generation correction, reinforcing the importance of preserving conditional transition dynamics rather than merely matching marginal distributions in time-series augmentation.

\section{Conclusion and Future Works}\label{sec:conclusion_future_work}

In this paper, we investigated the problem of distribution shift in autoregressive GAN-based time-series generation. While existing generative models primarily focus on matching marginal data distributions, we argue that preserving temporal dynamics is fundamentally more important for regression-oriented time-series modeling. Autoregressive generation inevitably introduces compounding deviations, leading to temporal drift and distributional distortion.

To address this challenge, we proposed a model-agnostic MCMC-based correction framework that refines generated trajectories by enforcing consistency with empirical transition statistics of the original multivariate time series. Our theoretical analysis demonstrated how distribution shift arises in conditional GANs (CGANs) and how MCMC correction can mitigate such deviations. Extensive experiments on the Lorenz, Licor, ETTh, and ILI datasets consistently showed that the proposed GAN-MCMC framework improves ACF error, skewness error, kurtosis error, $R^2$, discriminative score, and predictive score.

These findings suggest that high-quality time-series augmentation requires not only adversarial distribution matching but also explicit enforcement of temporal dynamical consistency. By shifting the focus from marginal fidelity to the preservation of temporal dynamics, this work provides a principled direction for improving synthetic multivariate time series generated by CGANs.

Future work may explore integrating the MCMC mechanism into diffusion models and addressing distribution shift in data generated by diffusion models. Furthermore, we plan to investigate integrating the MCMC framework directly into the training process of generative models.

\bibliographystyle{unsrt} 
\bibliography{references} 


\end{document}